\documentclass{article}

\usepackage{arxiv}

\usepackage[utf8]{inputenc}
\usepackage[T1]{fontenc}
\usepackage{hyperref}
\usepackage{url}
\usepackage{booktabs}
\usepackage{amsfonts}
\usepackage{amsmath, amssymb}
\usepackage{amsthm}
\usepackage{nicefrac}
\usepackage{microtype}
\usepackage{lipsum}
\usepackage{graphicx}
\usepackage{caption, subcaption}
\usepackage{natbib}
\usepackage{tabularx}
\usepackage{multirow}
\usepackage{algpseudocode}
\usepackage{float}
\usepackage[linesnumbered,ruled,vlined]{algorithm2e}
\usepackage{amsmath, amssymb}
\usepackage{bm}
\newtheorem{proposition}{Proposition}
\newtheorem{theorem}{Theorem}
\newtheorem{lemma}{Lemma}

\title{VOLTA: The Surprising Ineffectiveness of Auxiliary Losses for Calibrated Deep Learning}

\author{
 Rahul D Ray \\
  Department of Electronics \& Electrical Engineering\\
  BITS Pilani, Hyderabad Campus.\\
  \texttt{f20242213@hyderabad.bits-pilani.ac.in} \\
   \And
  Utkarsh Srivastava \\
  Department of Economics \& Finance\\
  BITS Pilani, Hyderabad Campus\\
  \texttt{f20240633@hyderabad.bits-pilani.ac.in} \\
}

\begin{document}

\date{}
\maketitle
\thispagestyle{empty}

\begin{abstract}
Uncertainty quantification (UQ) is essential for deploying deep learning models in safety-critical applications, yet no consensus exists on which UQ method performs best across different data modalities and distribution shifts. This paper presents a comprehensive benchmark of ten widely used UQ baselines including MC Dropout, SWAG, ensemble methods, temperature scaling, energy-based OOD, Mahalanobis, hyperbolic classifiers, ENN, Taylor-Sensus, and split conformal prediction against a simplified yet highly effective variant of VOLTA that retains only a deep encoder, learnable prototypes, cross-entropy loss, and post-hoc temperature scaling. We evaluate all methods on CIFAR-10 (in-distribution), CIFAR-100, SVHN, uniform noise (out-of-distribution), CIFAR-10-C (corruptions), and Tiny ImageNet features (tabular). VOLTA achieves competitive or superior accuracy (up to 0.864 on CIFAR-10), significantly lower expected calibration error (0.010 vs. 0.044–0.102 for baselines), and strong OOD detection (AUROC 0.802). Statistical testing over three random seeds shows that VOLTA matches or outperforms most baselines, with ablation studies confirming the importance of adaptive temperature and deep encoders. Our results establish VOLTA as a lightweight, deterministic, and well-calibrated alternative to more complex UQ approaches.
\end{abstract}

\section{Introduction}

Deep neural networks (DNNs) have achieved remarkable success across computer vision, natural language processing, and healthcare. However, their deployment in safety-critical applications such as autonomous driving, medical diagnosis, and industrial control remains hindered by a fundamental limitation: classical deep learning models are often overconfident and fail to quantify their predictive uncertainty \cite{abdullah2022review, alqarafi2024estimating}. This lack of uncertainty awareness can lead to catastrophic failures when a model encounters inputs that deviate from its training distribution, a scenario known as out-of-distribution (OOD) data.

Uncertainty Quantification (UQ) has emerged as an essential research direction to address this limitation. By equipping DNNs with mechanisms to express when they are uncertain, UQ methods aim to build trustworthy, reliable, and safe AI systems \cite{he2026survey, tambon2022certify, wang2025uncertainty}. The literature broadly distinguishes two main types of uncertainty: \emph{aleatoric uncertainty}, which captures inherent noise in the data, and \emph{epistemic uncertainty}, which reflects model ignorance that can be reduced with more data \cite{fakour2024structured}. Effective UQ methods should not only provide accurate predictions but also produce well-calibrated confidence scores meaning that a model's predicted probability aligns with the true likelihood of correctness and enable robust detection of OOD samples \cite{vahdani2025towards, loftus2022uncertainty}.

A wide range of UQ techniques has been proposed in recent years. These include Bayesian approaches such as Monte Carlo (MC) Dropout and variational inference \cite{mena2021survey}, ensemble methods like Deep Ensembles and SWAG, as well as single‑pass deterministic methods that leverage feature distances, energy scores, or conformal prediction \cite{he2026survey, alqarafi2024estimating}. In parallel, the evaluation of OOD detection has matured through dedicated benchmarks, most notably OpenOOD \cite{yang2022openood, zhang2023openood}, which provides standardized protocols for comparing methods across anomaly detection, open‑set recognition, and model uncertainty. Despite these advances, several persistent challenges remain:

\begin{itemize}
    \item \textbf{Benchmark fragmentation:} Many studies evaluate OOD detection using only a two‑dataset scheme (e.g., training on CIFAR‑10 and testing on SVHN), which can produce biased and unreliable estimates of real‑world performance \cite{shafaei2018less, berger2021confidence}. A more robust evaluation requires multiple OOD datasets and a clear distinction between near‑OOD (perceptually similar but semantically different) and far‑OOD samples \cite{mukhoti2023raising}.
    \item \textbf{Calibration neglect:} While accuracy is widely reported, model calibration often measured by Expected Calibration Error (ECE) and Maximum Calibration Error (MCE) is equally critical for trustworthy decision‑making, yet frequently overlooked \cite{wang2023calibration, karimi2022improving}. Moreover, calibration performance can degrade significantly under distribution shifts \cite{wenzel2022assaying, kong2020calibrated}.
    \item \textbf{Incomplete comparisons:} Most comparative studies focus on a narrow subset of methods or datasets, leading to inconclusive or even contradictory findings \cite{manivannan2020comparative, wenzel2022assaying}. A systematic comparison across multiple data modalities (images, tabular features), multiple OOD scenarios, and multiple random seeds is still lacking.
\end{itemize}

To address these gaps, this paper presents an extensive empirical evaluation of ten representative UQ baselines alongside a novel simplified variant of the VOLTA framework (VOLTA). Our study is guided by the following key principles:

\begin{enumerate}
    \item \textbf{Comprehensive evaluation metrics:} Beyond accuracy, we measure calibration using ECE, MCE, Brier score, and Negative Log‑Likelihood (NLL). We assess selective prediction performance via Area Under the Risk‑Coverage curve (AURC) and Selective AUC. For OOD detection we report AUROC, AUPRC, and FPR@95TPR.
    \item \textbf{Diverse data modalities and distribution shifts:} Experiments cover image classification (CIFAR‑10, CIFAR‑100) and high‑dimensional tabular features (Tiny ImageNet features). OOD detection is evaluated against multiple semantically different datasets (CIFAR‑100 vs. CIFAR‑10, SVHN, uniform noise), and robustness is tested on the CIFAR‑10‑C corruption benchmark.
    \item \textbf{Statistical rigor:} All baselines and the proposed method are run across three random seeds; pairwise statistical significance is assessed via two‑sided t‑tests.
\end{enumerate}

Our contributions are threefold:

\begin{enumerate}
    \item We provide a systematic, large‑scale comparison of 10 established UQ methods MC Dropout, SWAG, Posterior Networks, Temperature Scaling, Energy‑based OOD, Mahalanobis, Hyperbolic Networks, ENN, Taylor‑Sensus, and Split Conformal on both vision and tabular tasks, using unified metrics and evaluation protocols.
    \item We introduce VOLTA (Simplified VOLTA), a lightweight deterministic approach that learns a normalized feature space with prototype‑based classification and a temperature‑scaled confidence estimate. VOLTA achieves competitive or superior calibration and OOD detection while requiring no stochastic sampling.
    \item We release all code, checkpoints, and results to facilitate reproducible research and provide a foundation for future benchmarking efforts.
\end{enumerate}

The remainder of this paper is organised as follows. Section ~\ref{sec:related} reviews related work on uncertainty quantification, calibration, and out-of-distribution detection. Section ~\ref{sec:preprocessing} introduces the data preprocessing pipeline. Section ~\ref{sec:method} describes the baseline methods and the proposed VOLTA framework, along with training protocols and evaluation metrics. Section ~\ref{sec:setup} details the experimental setup, including datasets, hyperparameters, and statistical procedures. Section ~\ref{sec:results} presents the main experimental results and analysis. Section ~\ref{sec:ablation} reports the ablation study. Section ~\ref{sec:discussion} discusses the findings, limitations, and future directions, and Section ~\ref{sec:conclusion} concludes the paper.

\section{Related Works}
\label{sec:related}

Uncertainty quantification (UQ) for deep neural networks has received substantial attention across multiple research communities, leading to a rich landscape of methods, evaluation protocols, and theoretical insights. This section reviews the most relevant literature, organising existing work into four interconnected themes: ensemble-based uncertainty estimation, out-of-distribution (OOD) detection with post-hoc scoring functions, calibration metrics and visualisation tools, and scalable Bayesian approximations via Laplace methods. Together, these bodies of work provide the methodological backdrop against which we position our proposed VOLTA framework.

\subsection{Ensemble-Based Uncertainty Estimation}

One of the most practical and widely adopted families of UQ methods builds on the idea of aggregating predictions from multiple models. Lakshminarayanan et al. \cite{lakshminarayanan2017simple} proposed deep ensembles, a simple non-Bayesian alternative that trains several neural networks with different random initialisations and combines their outputs. Their seminal work demonstrated that deep ensembles produce well-calibrated uncertainty estimates that are comparable or superior to those from approximate Bayesian neural networks, while significantly outperforming Monte Carlo (MC) dropout on out-of-distribution examples. The success of deep ensembles has motivated extensive comparative studies. Wu et al. \cite{wu2023quantification} interpret both deep ensembles and MC dropout as simple variants of Bayesian neural network inference, where the former aggregates several independently trained networks and the latter treats dropout as a variational approximation. Valdenegro-Toro and Mori \cite{valdenegro2022deeper} compared deep ensembles, MC dropout, Flipout and DropConnect on their ability to disentangle aleatoric and epistemic uncertainty, finding that ensembles provide the best overall disentanglement quality. Durasov et al. \cite{durasov2021masksembles} introduced Masksembles as a middle ground, creating a continuous spectrum of ensemble-like models that balance the computational cost of deep ensembles with the reliability of MC dropout.

MC dropout itself has been reinterpreted as an ensemble averaging strategy. Zhang et al. \cite{zhang2019confidence} attributed poor calibration of MC dropout to limited model diversity in the sampled ensemble and proposed structured dropout variants (dropBlock, dropChannel, dropLayer) to improve diversity and confidence calibration. Pop and Fulop \cite{pop2018deep} introduced Deep Ensemble Bayesian Active Learning (DEBAL), which uses an ensemble of MC dropout models with different initialisations to correct the mode collapse issue observed when a single MC dropout model is used for active learning. Bachstein \cite{bachstein2019uncertainty} proposed dropout ensembles that combine both approaches and evaluated them on regression tasks, noting that deep ensembles estimate both aleatory and epistemic uncertainty while MC dropout in its basic form provides only a single uncertainty entity.

Comparative studies have attempted to rank these methods. Manivannan \cite{manivannan2020comparative} empirically evaluated deep ensembles, MC dropout variants, and test-time data augmentation on CIFAR-10, CIFAR-100, and two custom datasets, providing a relative ranking based on uncertainty quality, classifier performance, and calibration. Liu et al. \cite{liu2021uncertainty} compared MC dropout and deep ensemble for a turbulence modelling application, finding that the deep ensemble produced smoother and more reasonable uncertainty estimates with good scalability. More recently, Chan et al. \cite{chan2024estimating} introduced HyperDM, a single-model method that estimates both epistemic and aleatoric uncertainty by hypernetwork-driven ensembling, achieving comparable accuracy to multi-model ensembles while significantly reducing training time compared to deep ensembles and outperforming MC dropout in prediction quality.

\subsection{Out-of-Distribution Detection with Post-Hoc Scores}

Detecting inputs that lie outside the training distribution is a critical capability for reliable deployment. A large thread of research focuses on post-hoc scoring functions that can be applied to any pre-trained classifier without retraining. The simplest and most widely used baseline is the maximum softmax probability (MSP) \cite{hendrycks2016baseline}. Hendrycks and Gimpel demonstrated that correctly classified examples typically have higher MSP than misclassified or OOD examples, and evaluated this baseline across computer vision, natural language processing, and speech recognition tasks. Despite its simplicity, MSP remains a reference point; however, Liu et al. \cite{liu2020energy} showed that softmax confidence can produce arbitrarily high values for OOD inputs, making it suboptimal. They proposed the energy score, derived from the free energy of a neural network, which is theoretically aligned with input probability density and less susceptible to overconfidence. They revealed that the log of the softmax confidence is mathematically equivalent to a special case of the free energy score after shifting logits by their maximum.

Subsequent work has sought to improve upon MSP while retaining its post-hoc nature. Liu et al. \cite{liu2023gen} proposed the Generalized ENtropy (GEN) score, an entropy-based function applicable to any pre-trained softmax classifier, which pushes the limits of softmax-only OOD detection and outperforms state-of-the-art post-hoc methods. Xia and Bouganis \cite{xia2022augmenting, xia2024augmenting} introduced Softmax Information Retaining Combination (SIRC) for selective classification with OOD data (SCOD). SIRC augments softmax-based confidence scores to improve separation of OOD data from correctly classified in-distribution samples, consistently outperforming or matching MSP across various OOD datasets and architectures. They also noted a trade-off: many OOD detection methods that improve over MSP sacrifice performance in distinguishing correctly from incorrectly classified in-distribution samples. Pearce et al. \cite{pearce2021understanding} examined the apparent contradiction that softmax confidence works moderately well despite common criticism, identifying two implicit biases approximately optimal decision boundary structure and filtering effects of deep networks that encourage MSP to correlate with epistemic uncertainty.

Other post-hoc methods leverage different sources of information. The Mahalanobis distance-based confidence score \cite{lee2018simple} has become a strong baseline. Kamoi and Kobayashi \cite{kamoi2020mahalanobis} investigated why Mahalanobis distance is effective for anomaly detection, finding that its superior performance stems from information not useful for classification, a different mechanism from prediction-confidence-based methods. They proposed combining Mahalanobis with ODIN to improve robustness. Denouden et al. \cite{denouden2018improving} combined Mahalanobis distance in the latent space of reconstruction autoencoders with the reconstruction loss, showing that the hybrid approach often improves OOD detection over reconstruction error alone. Zhang et al. \cite{zhang2025out} integrated an enhanced Mahalanobis distance with temperature scaling for power system text data, using multi-scaled PCA to enhance the distance computation.

Temperature scaling, originally proposed for calibration \cite{guo2017calibration}, has also proven effective for OOD detection. Mozafari et al. \cite{mozafari2019unsupervised} proposed unsupervised temperature scaling (UTS) that does not require labelled samples. Krumpl et al. \cite{krumpl2024ats} introduced Adaptive Temperature Scaling (ATS), a post-hoc method that dynamically computes a sample-specific temperature from intermediate layer activations, enhancing state-of-the-art OOD detectors. Roady et al. \cite{roady2019out} evaluated OOD detection methods on large-scale datasets (ImageNet-1K, Places-434), finding that ODIN (which combines temperature scaling and input perturbation) performs best for ImageNet, while OpenMax and Mahalanobis perform best for Places. Hendrycks et al. \cite{hendrycks2019scaling} showed that MSP does not scale well to large-scale, challenging conditions, and found that a maximum logit (MaxLogit) baseline consistently outperforms MSP. Nitsch et al. \cite{nitsch2021out} applied OOD detection to automotive perception, noting that neural networks often assign high softmax confidence to OOD samples, and proposed a method that outperforms state-of-the-art on real-world automotive datasets without requiring OOD data during training. Bulusu et al. \cite{bulusu2020anomalous} surveyed anomalous example detection methods, noting that combining softmax prediction score and entropy with temperature scaling yields a score that is high for in-distribution examples and low for OOD examples. Bui and Liu \cite{bui2023density} proposed Density-Softmax, a deterministic framework that combines a density function with the softmax layer to reduce overconfidence under distribution shifts; it is distance-aware in feature space, leading to uniform probabilities and high uncertainty on OOD data. Veličković et al. \cite{velivckovic2024softmax} demonstrated a fundamental limitation of softmax to robustly approximate sharp functions out-of-distribution, proposing adaptive temperature as an ad-hoc remedy.

\subsection{Calibration Metrics and Reliability Diagrams}

Evaluating the calibration of probabilistic predictions is essential for comparing UQ methods. Guo et al. \cite{guo2017calibration} popularised the use of reliability diagrams and the expected calibration error (ECE) for modern neural networks, showing that temperature scaling is a simple and effective post-hoc calibration technique. Reliability diagrams plot expected accuracy as a function of confidence; deviations from the identity line indicate miscalibration. Vaicenavicius et al. \cite{vaicenavicius2019evaluating} developed a theoretical framework for calibration evaluation, proposing multidimensional reliability diagrams for multiclass settings. Arrieta-Ibarra et al. \cite{arrieta2022metrics} compared histogram-based reliability diagrams with a cumulative approach that avoids bin width choices, introducing empirical cumulative calibration errors (ECCEs) that offer statistical advantages over ECE.

Lane \cite{lane2025comprehensive} provided a comprehensive review of classifier probability calibration metrics, clarifying that while a bar chart is often called a reliability diagram, a line representation is sometimes distinguished as a reliability plot. Fan et al. \cite{fan2025calzone} introduced Calzone, a Python package that provides reliability diagrams with error bars, class-conditional calibration error, and different binning schemes, addressing limitations of existing libraries. Minderer et al. \cite{minderer2021revisiting} used reliability diagrams to compare modern neural network architectures, showing that MLP-Mixer and Vision Transformers combine high accuracy with good calibration. Kumar et al. \cite{kumar2019verified} evaluated recalibration methods including Platt scaling and temperature scaling, finding them less calibrated than reported, and introduced a scaling-binning calibrator. Gupta et al. \cite{gupta2020calibration} proposed a binning-free calibration metric (KS error) inspired by the Kolmogorov–Smirnov test, and developed spline-based recalibration. Dheur and Taieb \cite{dheur2023large} used PIT (probability integral transform) reliability diagrams for regression calibration. Wang et al. \cite{wang2021rethinking} used reliability diagrams to visualise the calibration effect of temperature scaling and argued that overconfidence is not always detrimental.

\subsection{Scalable Bayesian Inference via Laplace Approximation}

The Laplace approximation has recently emerged as a powerful post-hoc tool for turning pre-trained neural networks into Bayesian neural networks. Daxberger et al. \cite{daxberger2021laplace} demonstrated through extensive experiments that the Laplace approximation is competitive with more popular alternatives while excelling in computational cost. They highlighted that last-layer Laplace approximation has minimal overhead over a standard forward pass and released the laplace library for scalable last-layer LAs. Deng et al. \cite{deng2022accelerated} proposed the accelerated linearized Laplace approximation (ELLA) to address unreliability issues of existing approximations, showing that ELLA provides competitive predictive uncertainty and scales to vision transformers. Ortega et al. \cite{ortega2026scalable} introduced ScaLLA, which approximates the kernel of the linearized Laplace approximation using a surrogate neural network to avoid explicit Jacobian computation, achieving scalability via efficient Jacobian-vector products. Ortega et al. \cite{ortega2023variational} proposed variational linearized Laplace approximation (VaLLA), which achieves training cost sub-linear in dataset size by stochastic optimisation, outperforming last-layer and Kronecker factorised approximations in predictive distribution quality.

Bergamin et al. \cite{bergamin2023riemannian} developed Riemannian Laplace approximations that respect the geometry of the parameter space, noting that the Laplace approximation has become popular partly due to new scalable computation methods, and that last-layer approaches are currently the only competitive methods against ensembles. Erick et al. \cite{erick2025last} used last-layer Laplacian approximation for medical image analysis, finding that restricting stochasticity to the last layer sufficiently provides competitive uncertainty estimation. Weber et al. \cite{weber2025laplax} implemented Laplace approximations in JAX, addressing last-layer and subnetwork approximations. Perone et al. \cite{perone2021l2m} proposed L2M, which constructs the Laplace approximation from the gradient second moment already estimated by optimisers like Adam, requiring no extra computation. Gui et al. \cite{gui2021laplace} studied diagonal Hessian approximation within the Laplace framework, demonstrating computational convenience and overcoming ill-posedness in large-scale problems. Huseljic et al. \cite{huseljic2022efficient} proposed a Bayesian update method using last-layer Laplace approximation to avoid costly retraining when new data arrives, computing the inverse Hessian in closed form for low complexity.

In summary, the literature provides a rich toolkit for uncertainty quantification, from ensemble methods and post-hoc OOD scores to calibration metrics and scalable Bayesian approximations. Our proposed VOLTA framework builds on these insights by combining a deep normalised encoder with prototype-based classification and temperature scaling, offering a deterministic, well-calibrated alternative that is simpler than ensembles and more scalable than full Laplace approximations while achieving competitive or superior performance across multiple metrics.

\section{Data preprocessing}
\label{sec:preprocessing}

All datasets were preprocessed following a fixed, meticulously documented pipeline to ensure reproducibility and strict fairness when comparing across different baseline models. The preprocessing steps differed between vision and tabular data but shared common elements: deterministic train–validation splits, input normalisation, and, where applicable, corruption synthesis or feature extraction. The following subsections detail the treatment of each dataset, and Table~\ref{tab:preprocessingtable} summarises all preprocessing choices in a compact form.

\subsection{Vision Datasets}

\textbf{CIFAR-10 (in‑distribution).}  
CIFAR‑10 consists of 50\,000 training and 10\,000 test colour images of size $32\times32$ pixels with ten classes. To obtain a clean validation set for hyperparameter tuning and early stopping, the training set was further split into a training subset of 40\,000 images and a validation subset of 10\,000 images using a fixed random seed (42). During training, standard data augmentation was applied to improve generalisation: random horizontal flipping with a probability of 0.5 and random cropping after adding four pixels of zero padding on each side (i.e., the image was first padded to $40\times40$, then a random $32\times32$ crop was taken). All images were normalised channel‑wise using the per‑channel mean $(0.5,0.5,0.5)$ and standard deviation $(0.5,0.5,0.5)$ to map pixel values from the original $[0,1]$ range to the $[-1,1]$ interval. Test images were only normalised without any augmentation. This preprocessing is identical to that used in many benchmark studies, enabling direct comparison with prior work.

\textbf{CIFAR‑100 (out‑of‑distribution for CIFAR‑10, in‑distribution for its own experiments).}  
CIFAR‑100 contains 50\,000 training and 10\,000 test images of size $32\times32$ pixels, but with 100 fine‑grained classes. When used as an out‑of‑distribution dataset for models trained on CIFAR‑10, no data split was necessary; the entire test set (10\,000 images) was employed as OOD data. For experiments where CIFAR‑100 served as an in‑distribution dataset (e.g., when evaluating a model's ability to learn fine‑grained categories), the training set was split into 45\,000 training and 5\,000 validation images, again using a fixed random seed. CIFAR‑100 was normalised using its own per‑channel statistics: mean $(0.5071,0.4867,0.4408)$ and standard deviation $(0.2675,0.2565,0.2761)$. Unlike CIFAR‑10, no data augmentation was applied to CIFAR‑100 because the primary goal was to measure clean performance on a more challenging class hierarchy without the confounding effect of augmentation.

\textbf{SVHN (out‑of‑distribution).}  
The Street View House Numbers (SVHN) dataset provides 26\,032 test images of size $32\times32$ pixels, each containing a digit from 0 to 9. SVHN was used exclusively as an out‑of‑distribution dataset for models trained on CIFAR‑10. To align with the input distribution expected by those models, SVHN images were normalised using exactly the same CIFAR‑10 mean and standard deviation $(0.5,0.5,0.5)$ and $(0.5,0.5,0.5)$. No resizing or cropping was required because SVHN images already match the $32\times32$ resolution of CIFAR‑10. No augmentation was applied.

\textbf{Uniform Noise (synthetic out‑of‑distribution).}  
To test model behaviour under extreme distributional shift, a synthetic out‑of‑distribution set was generated. We sampled 5\,000 independent images from a uniform distribution $\mathcal{U}[0,1]$, each with shape $3\times32\times32$ (i.e., three colour channels, 32 pixels height and width). These random tensors were then normalised using the same CIFAR‑10 statistics (mean 0.5, standard deviation 0.5) so that they lie approximately in the same numerical range as the in‑domain data, albeit without any natural image structure. This synthetic OOD set serves as a challenging stress test for any classifier.

\textbf{CIFAR‑10‑C (corruptions).}  
CIFAR‑10‑C is a benchmark dataset that applies 19 common image corruptions (e.g., Gaussian noise, shot noise, impulse noise, defocus blur, motion blur, zoom blur, snow, frost, fog, brightness, contrast, elastic transform, pixelate, JPEG compression, etc.) to every image in the original CIFAR‑10 test set. Each corruption is applied at five severity levels. For computational efficiency and to maintain a balanced evaluation, we retained only the first 10\,000 corrupted images per corruption type (i.e., per corruption–severity combination). All corrupted images were normalised with the same CIFAR‑10 mean and standard deviation used for clean images, ensuring that the only difference between clean and corrupted inputs is the presence of the corruption, not a change in normalisation.

\textbf{ResNet‑18 Feature Extraction (for VOLTA experiments).}  
The VOLTA model (a variant of the VOLTA framework) operates not on raw pixels but on features extracted by a fixed, pre‑trained ResNet‑18. For experiments on CIFAR‑10 and CIFAR‑100, we employed a ResNet‑18 model pre‑trained on ImageNet (1.28\,million images, 1000 classes) as a frozen feature extractor. The preprocessing pipeline for this extractor involved several steps: first, the input image was denormalised from the dataset‑specific statistics back to the raw $[0,1]$ range; second, the image was resized to $224\times224$ pixels using bilinear interpolation to match the input size expected by ResNet‑18; third, the resized image was normalised using ImageNet statistics (mean $[0.485,0.456,0.406]$, standard deviation $[0.229,0.224,0.225]$); finally, the preprocessed image was passed through the ResNet‑18 model (without the final classification layer) to obtain a 512‑dimensional feature vector. Because this extraction is computationally expensive, we cached all feature vectors to disk after the first run, so that subsequent experiments (e.g., with different random seeds or hyperparameters) could load the cached features directly, guaranteeing both speed and reproducibility.

\subsection{Tabular Dataset}

\textbf{Tiny ImageNet Features.}  
This dataset consists of pre‑computed feature vectors extracted from an ensemble model trained on Tiny ImageNet. Each feature vector has 512 dimensions. The original 200 classes were split functionally: the first 100 classes (indices 0–99) formed the in‑distribution set, while the remaining 100 classes (100–199) served as the out‑of‑distribution set. From the in‑distribution training features, we initially had 80\,000 samples after filtering out any missing or corrupted entries. A fixed random seed was used to hold out 20\% of these samples as a validation set, resulting in 64\,000 training samples and 16\,000 validation samples. The test set for the in‑distribution part comprised 5\,000 samples from classes 0–99 (50 per class), and the OOD test set comprised 5\,000 samples from classes 100–199 (again 50 per class). No further normalisation or scaling was applied to the features because they already lie in a well‑behaved range (the ensemble's final layer activations are bounded). This tabular dataset allows us to evaluate uncertainty quantification methods on a high‑dimensional, non‑image modality.

\subsection{Summary of Preprocessing Steps}

Table~\ref{tab:preprocessingtable} provides a concise overview of all preprocessing operations for each dataset used in this study. To fit the width of a standard academic paper column, the table uses abbreviated column headers and compact entries. The columns are: Dataset (name), Role (in‑distribution ID or out‑of‑distribution OOD), Augmentation (type applied during training), Normalisation (mean and std used), Splits (train/validation/test counts), OOD creation (method if synthetic), and Special notes (any additional remarks). All preprocessing code was implemented in Python using standard libraries (NumPy, scikit‑learn, torchvision) and applied identically to all compared baselines to ensure a fair evaluation.

\begin{table}[htbp]
\centering
\caption{Summary of preprocessing steps for all datasets. Norm.: normalisation; OOD: out-of-distribution; ID: in-distribution. Important thing to note that SVHN, Uniform Noise and CIFAR-10-C do not have any training or validation split. SVHN was used exclusively as out‑of‑distribution (OOD) test data. No training or validation split, CIFAR-10-C was used as corrupted test set. No training or validation split..}
\label{tab:preprocessingtable}
\footnotesize
\begin{tabular}{@{}lcll@{}}
\toprule
\textbf{Dataset} & \textbf{Role} & \textbf{Norm. (mean/std)} & \textbf{Splits (tr/val/te)} \\
\midrule
CIFAR-10 & ID & (0.5,0.5,0.5)/(0.5,0.5,0.5) & 40k/10k/10k \\
CIFAR-100 & OOD (for C10); ID & (0.507,0.487,0.441)/(0.268,0.257,0.276) & 45k/5k/10k \\
SVHN & OOD & same as CIFAR-10 & –/–/26k \\
Uniform Noise & OOD & same as CIFAR-10 & –/–/5k \\
CIFAR-10-C & corruption & same as CIFAR-10 & –/–/10k per corr. \\
TinyImageNetFeat & ID+OOD & none & 64k/16k/5k (ID); 5k OOD \\
ResNet-18 Feat & ID (VOLTA) & ImageNet stats & same as source \\
\bottomrule
\end{tabular}
\end{table}

\section{Methodology}
\label{sec:method}

This section details the experimental framework, including the neural architectures, the ten uncertainty quantification (UQ) baselines, the proposed VOLTA method, training protocols, evaluation metrics, and the procedures for statistical comparison and ablation. All experiments were implemented in PyTorch and executed on an NVIDIA Tesla T4 GPU.

\subsection{Model Architectures}

For the vision experiments on CIFAR-10 and CIFAR-100, a common convolutional neural network (CNN) backbone was employed. This backbone consists of two convolutional blocks. The first block contains two $3\times3$ convolutional layers with 64 filters, each followed by batch normalisation and a ReLU activation, and a $2\times2$ max-pooling layer with stride 2. Dropout (with rate 0.2 when used for MC Dropout, otherwise 0.0) is applied after the pooling layer. The second block is identical except that it uses 128 filters. The resulting feature map of size $128\times8\times8$ is flattened to 8192 dimensions and passed through a fully connected layer with 256 units and ReLU, followed by a dropout layer and a final linear layer that maps to the number of classes (10 for CIFAR-10, 100 for CIFAR-100). For tabular experiments on the Tiny ImageNet features dataset (512‑dimensional input vectors), a three‑layer multilayer perceptron (MLP) was used. The MLP has hidden layers of sizes 256, 128, and 64, each with ReLU activation and dropout (rate 0.2). The final 64‑dimensional feature vector is passed to a linear classifier with 100 output units (the number of in‑distribution classes). This same CNN and MLP architectures serve as the shared base for all post‑hoc baselines, including temperature scaling, energy‑based OOD detection, Mahalanobis distance, and split conformal prediction.

Several baselines required specialised architectures. For hyperbolic classifiers, the Euclidean output layer is replaced with a hyperbolic head. Given the $L_2$‑normalised feature vector $\mathbf{z}$ and class prototypes $\mathbf{p}_k$ in the Poincaré ball of curvature $c=1.0$, the logit for class $k$ is defined as the negative squared distance $-\|\mathrm{proj}(\mathbf{z}) \ominus \mathrm{proj}(\mathbf{p}_k)\|^2$, where $\mathrm{proj}$ is the exponential map from the tangent space to the ball. For the ENN (Epistemic Neural Network), we augment the model with a latent variable $z$ drawn uniformly from $\{1,\dots,8\}$. The network consists of the same feature extractor as the base CNN, followed by an embedding layer for $z$ and a small MLP that concatenates the feature vector with the embedding to produce logits. During training, a random $z$ is sampled per sample; during inference, the predictive distribution is obtained by averaging the softmax outputs over all eight values of $z$. For SWAG (Stochastic Weight Averaging–Gaussian), we train a deterministic model (no dropout) with standard cross‑entropy, then collect weight snapshots for 20 additional epochs using SGD with learning rate $10^{-3}$ and momentum 0.9. The first two moments (mean and a low‑rank covariance of rank 20) are estimated from these snapshots. At test time, 10 weight samples are drawn from the resulting Gaussian distribution, and the predictive probabilities are averaged.

\begin{figure*}[t]
    \centering
    \includegraphics[width=0.95\textwidth]{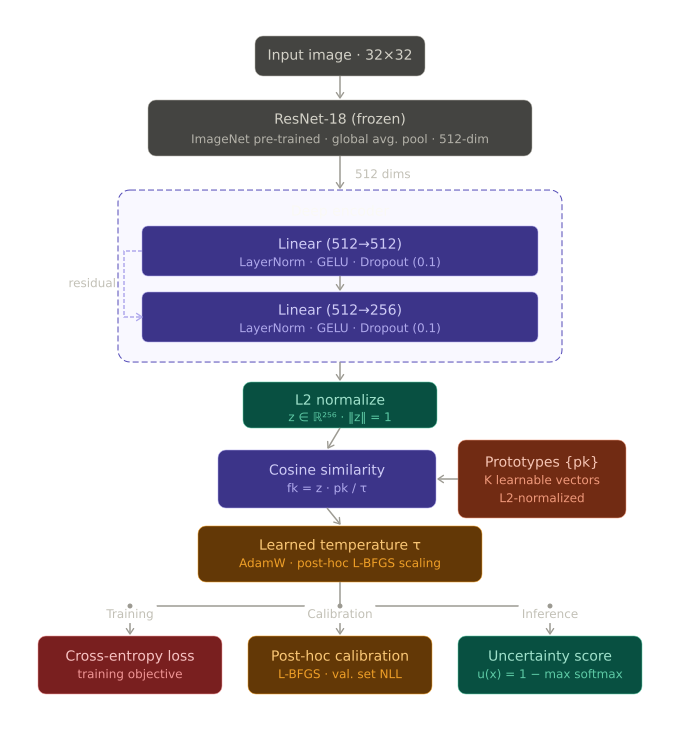}
    \caption{
    \textbf{VOLTA Architecture Overview.} 
    The model consists of a frozen feature extractor (ResNet-18 for vision or raw feature input for tabular data), 
    followed by a deep encoder that maps inputs into a normalized latent space. Learnable class prototypes are 
    embedded in the same space, and classification is performed via cosine similarity between features and prototypes. 
    A learnable temperature parameter scales the logits, followed by post-hoc temperature calibration. 
    The resulting framework enables efficient, deterministic uncertainty estimation without stochastic sampling.
    }
    \label{fig:volta_architecture}
\end{figure*}

\subsection{Uncertainty Quantification Baselines}

We implemented ten baseline methods that cover a broad spectrum of uncertainty quantification approaches: approximate Bayesian inference (MC Dropout, SWAG), post‑hoc calibration (Temperature Scaling, Split Conformal), deterministic uncertainty (Energy‑based OOD, Mahalanobis, Posterior Networks, Hyperbolic, ENN, Taylor‑Sensus). All baselines share the same data preprocessing and evaluation pipeline to ensure fair comparison.

MC Dropout uses dropout (rate 0.2) both during training and inference. At test time, ten forward passes with dropout enabled are performed; the predictive probability is the mean of the softmax outputs, and predictive entropy serves as the uncertainty score. SWAG, as described above, also averages ten weight samples. Posterior Networks first train a deterministic CNN with cross‑entropy. On the training set, we extract the penultimate‑layer features and fit a Gaussian mixture model (GMM) with full covariance matrices. For CIFAR‑10 we use ten components, for CIFAR‑100 and Tiny ImageNet we use 50 components (to limit computational cost). The negative log‑likelihood of the test point under the GMM is used as the OOD score. Temperature Scaling calibrates a pre‑trained deterministic CNN by optimising a single scalar temperature $T>0$ on the validation set to minimise the negative log‑likelihood of the logits, using the L‑BFGS optimiser. The energy‑based method uses the same deterministic model and computes the energy score $E(\mathbf{x}) = -\log\sum_k e^{f_k(\mathbf{x})}$, where $f_k$ are the logits. Lower energy indicates higher confidence.

The Mahalanobis baseline fits class‑conditional Gaussian distributions on the penultimate‑layer features of the training set. For each class $c$, we compute the mean feature vector $\boldsymbol{\mu}_c$ and a shared precision matrix $\boldsymbol{\Lambda}$ as the pseudo‑inverse of the pooled covariance matrix (regularised with $10^{-6}$ on the diagonal). The OOD score is the minimum Mahalanobis distance $\min_c (\mathbf{f} - \boldsymbol{\mu}_c)^\top \boldsymbol{\Lambda} (\mathbf{f} - \boldsymbol{\mu}_c)$ across all classes. The hyperbolic classifier is trained with cross‑entropy on the Poincaré logits; its uncertainty is measured as the entropy of the softmax output, while OOD detection uses the inverse of the margin between the top two logits. The ENN, described earlier, is trained with cross‑entropy where each sample is paired with a random $z$. Its predictive uncertainty is the entropy of the averaged softmax over $z$. Taylor‑Sensus adds a regularisation term to the cross‑entropy loss: $0.1 \cdot \mathrm{Var}(\mathrm{logits})$, encouraging the logits to have low variance. This method remains deterministic and uses predictive entropy as uncertainty. Finally, split conformal prediction uses the validation set to compute non‑conformity scores $s_i = 1 - \hat{p}_{y_i}(\mathbf{x}_i)$, where $\hat{p}_{y_i}$ is the predicted probability for the true class. The threshold $\hat{q}$ is set to the $\lceil (n+1)(1-\alpha) \rceil / n$ quantile of the scores with $\alpha = 0.1$. The size of the prediction set (or the indicator of whether the set is non‑empty) serves as an uncertainty proxy.

\subsection{Proposed Method: VOLTA}

The proposed VOLTA method is a lightweight, deterministic approach that combines a deep encoder, learnable prototypes, and temperature scaling. Unlike the full VOLTA framework, this simplified version omits reconstruction, margin, diversity, and contrastive losses, retaining only cross‑entropy, which makes it computationally efficient and easy to train.

For vision datasets (CIFAR‑10 and CIFAR‑100), we use a fixed ResNet‑18 pre‑trained on ImageNet as a feature extractor. Input images are first denormalised from the dataset‑specific statistics back to the $[0,1]$ range, resized to $224\times224$ pixels using bilinear interpolation, and normalised with ImageNet statistics (mean $[0.485,0.456,0.406]$, standard deviation $[0.229,0.224,0.225]$). The ResNet‑18 output (without the final classification layer) is a 512‑dimensional feature vector. These features are precomputed and cached to avoid recomputation across different random seeds. They are then passed through a two‑layer encoder consisting of linear layers with layer normalisation, GELU activation, and dropout (rate 0.1). The first layer maps 512 to 512 dimensions, the second maps 512 to 256 dimensions. A residual connection from the input 512‑dimensional features to the 256‑dimensional output is added before $L_2$ normalisation. For tabular data (Tiny ImageNet features), the encoder is a three‑layer MLP with hidden sizes 256, 128, and 64, each with layer normalisation, GELU, and dropout (rate 0.2). The final layer projects to 128 dimensions, followed by $L_2$ normalisation.

A set of $K$ learnable prototypes $\{\mathbf{p}_k\}_{k=1}^K$ (where $K$ is the number of classes) is initialised randomly and also $L_2$‑normalised. For a given input $\mathbf{x}$, the encoder produces a normalised feature vector $\mathbf{z} = \mathrm{encoder}(\mathbf{x})$. The similarity to each prototype is measured by the dot product $\mathbf{z}^\top \mathbf{p}_k$, which is equivalent to cosine similarity due to normalisation. The logits are then defined as $f_k(\mathbf{x}) = \tau^{-1} \cdot \mathbf{z}^\top \mathbf{p}_k$, where $\tau$ is a learned temperature parameter that is optimised jointly with the rest of the network. After training, a second temperature scaling step is performed on the validation set to calibrate the probabilities, using the same L‑BFGS procedure as in the temperature scaling baseline. For uncertainty estimation, we define an uncertainty score $u(\mathbf{x}) = 1 - \max_k \mathrm{softmax}(\mathbf{z}^\top \mathbf{p}_k / \tau_{\mathrm{unc}})$, where $\tau_{\mathrm{unc}} = 0.1$ is a fixed sharpening temperature that emphasises differences in similarity. This score approximates the distance to the decision boundary and tends to be higher for ambiguous or out‑of‑distribution inputs.

Training of VOLTA is performed from scratch using the AdamW optimiser with a learning rate of $3\times10^{-3}$ and weight decay $10^{-3}$. The loss function is standard cross‑entropy. A OneCycle learning rate scheduler is used with a warm‑up phase covering 10\% of the total epochs. Early stopping with a patience of 20 epochs monitors the validation loss. For CIFAR‑10 and CIFAR‑100, we train for up to 100 epochs with a batch size of 512; for Tiny ImageNet features, we train for up to 80 epochs with a batch size of 1024. All models are trained on the precomputed ResNet‑18 features (for vision) or raw features (for tabular), never on raw pixels.

\begin{algorithm}[hbtp]
\small
\caption{VOLTA: Training and Deterministic Uncertainty Inference}
\label{alg:volta_compact}
\SetAlgoLined
\SetKwInOut{Input}{Input}
\SetKwInOut{Output}{Output}
\SetKwFunction{Softmax}{softmax}

\Input{
    $\mathcal{D}_{\text{train}}$, $\mathcal{D}_{\text{val}}$; pretrained $f: \mathcal{X} \to \mathbb{R}^{d_f}$;
    encoder $\bm{g}_{\bm{\theta}}: \mathbb{R}^{d_f} \to \mathbb{R}^D$; batch size $B$, epochs $E$,
    learning rate $\eta$, weight decay $\lambda$, $\tau_0$, $\tau_{\text{unc}}$, patience $P$.
}
\Output{
    $\bm{\theta}^*$, $\mathbf{P}^* = [\bm{p}_1,\dots,\bm{p}_K]$ with $\|\bm{p}_k\|_2 = 1$, $\tau^*$.
}

\BlankLine
\emph{// Offline feature extraction (frozen $f$)}\;
\For{$i=1$ \KwTo $N$}{$\bm{h}_i \leftarrow f(\bm{x}_i)$}
\For{$i=1$ \KwTo $M$}{$\bm{h}_i^{\text{val}} \leftarrow f(\bm{x}_i^{\text{val}})$}

\BlankLine
\emph{// Initialization}\;
Initialize $\bm{\theta}$; $\tilde{\bm{p}}_k \sim \mathcal{N}(\bm{0},\mathbf{I})$, $\bm{p}_k \leftarrow \tilde{\bm{p}}_k / \|\tilde{\bm{p}}_k\|_2$ $\forall k$\;
$\tau \leftarrow \tau_0$; $\text{best\_loss} \leftarrow \infty$; $\text{cnt} \leftarrow 0$\;

\BlankLine
\emph{// Training with Riemannian SGD}\;
\For{$\text{epoch}=1$ \KwTo $E$}{
    \For{each batch $\mathcal{B}$ of size $B$}{
        \emph{// Forward}\;
        \For{$i \in \mathcal{B}$}{
            $\bm{v}_i \leftarrow \bm{g}_{\bm{\theta}}(\bm{h}_i)$; $\bm{z}_i \leftarrow \bm{v}_i/\|\bm{v}_i\|_2$\;
            $\ell_{ik} \leftarrow \bm{z}_i^\top\bm{p}_k/\tau$; $\hat{\bm{y}}_i \leftarrow \Softmax(\bm{\ell}_i)$\;
        }
        $\mathcal{L} \leftarrow -\frac{1}{|\mathcal{B}|}\sum_{i\in\mathcal{B}}\log\hat{y}_{i,y_i}$\;
        
        \emph{// Gradients (explicit)}\;
        \For{$i \in \mathcal{B}$}{
            $\frac{\partial\mathcal{L}}{\partial\bm{z}_i} \leftarrow \frac{1}{\tau}\big(\sum_k\hat{y}_{ik}\bm{p}_k - \bm{p}_{y_i}\big)$\;
            $\frac{\partial\mathcal{L}}{\partial\bm{v}_i} \leftarrow \frac{1}{\|\bm{v}_i\|_2}(\mathbf{I}-\bm{z}_i\bm{z}_i^\top)\frac{\partial\mathcal{L}}{\partial\bm{z}_i}$\;
            (Backprop to $\bm{\theta}$)\;
        }
        \For{$k=1$ \KwTo $K$}{
            $\frac{\partial\mathcal{L}}{\partial\tilde{\bm{p}}_k} \leftarrow \frac{1}{\tau\|\tilde{\bm{p}}_k\|_2|\mathcal{B}|}
            \sum_{i\in\mathcal{B}}(\hat{y}_{ik}-\delta_{k,y_i})(\mathbf{I}-\bm{p}_k\bm{p}_k^\top)\bm{z}_i$\;
        }
        $\frac{\partial\mathcal{L}}{\partial\tau} \leftarrow \frac{1}{\tau^2|\mathcal{B}|}
        \sum_{i\in\mathcal{B}}\big(\bm{z}_i^\top\bm{p}_{y_i} - \sum_k\hat{y}_{ik}\bm{z}_i^\top\bm{p}_k\big)$\;
        
        \emph{// AdamW update}\;
        $\bm{\theta} \leftarrow \bm{\theta} - \eta\cdot\text{AdamW}(\nabla_{\bm{\theta}}\mathcal{L},\lambda)$\;
        \For{$k=1$ \KwTo $K$}{
            $\tilde{\bm{p}}_k \leftarrow \tilde{\bm{p}}_k - \eta\cdot\text{AdamW}(\frac{\partial\mathcal{L}}{\partial\tilde{\bm{p}}_k},\lambda)$\;
            $\bm{p}_k \leftarrow \tilde{\bm{p}}_k/\|\tilde{\bm{p}}_k\|_2$\;
        }
        $\tau \leftarrow \max(\tau - \eta\frac{\partial\mathcal{L}}{\partial\tau}, \epsilon)$\;
    }
    \emph{// Validation \& early stopping}\;
    $\mathcal{L}_{\text{val}} \leftarrow -\frac{1}{M}\sum_{i=1}^M\log
    \frac{\exp(\bm{z}_i^{\text{val}\top}\bm{p}_{y_i}/\tau)}{\sum_j\exp(\bm{z}_i^{\text{val}\top}\bm{p}_j/\tau)}$,
    $\bm{z}_i^{\text{val}} = \frac{\bm{g}_{\bm{\theta}}(f(\bm{x}_i^{\text{val}}))}{\|\bm{g}_{\bm{\theta}}(f(\bm{x}_i^{\text{val}}))\|_2}$\;
    \If{$\mathcal{L}_{\text{val}} < \text{best\_loss}$}{
        $\text{best\_loss} \leftarrow \mathcal{L}_{\text{val}}$; $\text{cnt}\leftarrow 0$; save checkpoint\;
    }\Else{
        $\text{cnt}\leftarrow\text{cnt}+1$; \If{$\text{cnt}\ge P$}{\textbf{break}}
    }
}

\BlankLine
\emph{// Post-hoc temperature scaling (convex in $1/\tau$)}\;
Fix $\bm{\theta}^*$, $\mathbf{P}^*$; solve $\beta^* = \arg\min_{\beta>0} -\sum_{i=1}^M\log
\frac{\exp(\beta\,\bm{z}_i^{\text{val}\top}\bm{p}_{y_i}^*)}{\sum_j\exp(\beta\,\bm{z}_i^{\text{val}\top}\bm{p}_j^*)}$
via L-BFGS; $\tau^* \leftarrow 1/\beta^*$\;

\BlankLine
\emph{// Inference (deterministic)}\;
\For{each test $\bm{x}$}{
    $\bm{z} \leftarrow \frac{\bm{g}_{\bm{\theta}^*}(f(\bm{x}))}{\|\bm{g}_{\bm{\theta}^*}(f(\bm{x}))\|_2}$\;
    $\hat{\bm{y}}^{\text{pred}} \leftarrow \Softmax(\mathbf{P}^{*\top}\bm{z}/\tau^*)$,
    $\hat{\bm{y}}^{\text{unc}} \leftarrow \Softmax(\mathbf{P}^{*\top}\bm{z}/\tau_{\text{unc}})$\;
    $\hat{y} \leftarrow \arg\max_k \hat{y}_k^{\text{pred}}$,
    $u(\bm{x}) \leftarrow 1 - \max_k \hat{y}_k^{\text{unc}}$\;
}
\end{algorithm}

\subsection{Evaluation Metrics}

We evaluate all methods on three complementary axes: predictive performance, calibration quality, and out‑of‑distribution detection ability. For classification performance, we report accuracy, negative log‑likelihood (NLL), and the Brier score. The Brier score is further decomposed into uncertainty, resolution, and reliability components following the standard decomposition $\mathrm{Brier} = \mathrm{UNC} - \mathrm{RES} + \mathrm{REL}$, where $\mathrm{UNC}$ measures the inherent difficulty of the data, $\mathrm{RES}$ measures the model’s ability to separate classes, and $\mathrm{REL}$ measures calibration error. Calibration is also assessed via the Expected Calibration Error (ECE) and Maximum Calibration Error (MCE) using 15 equal‑width bins.

For selective prediction (i.e., the ability to abstain on uncertain samples), we sort test samples by their uncertainty score (ascending) and compute the accuracy and risk ($1-\mathrm{accuracy}$) as a function of the fraction of samples retained (coverage). From this risk--coverage curve we derive the Area Under the Risk--Coverage Curve (AURC), where lower values indicate better selective prediction. We also report the excess AURC (e-AURC), defined as $\mathrm{AURC} - \mathrm{err} \cdot (1-\mathrm{err})$, which removes the contribution of the base error rate, and the selective AUC, which is the area under the accuracy--coverage curve (higher is better).

Out‑of‑distribution detection is evaluated by treating the uncertainty score as a detection statistic: lower scores are expected for in‑distribution samples, higher scores for out‑of‑distribution samples. We compute the area under the receiver operating characteristic curve (AUROC), the area under the precision–recall curve (AUPRC), and the false positive rate when the true positive rate is 95\% (FPR95). For efficiency, we measure the inference time in milliseconds per sample (averaged over five batches) and the model size in megabytes (sum of parameter memory).

\subsection{Statistical Comparison and Ablation Study}

To assess the statistical significance of differences between VOLTA and the ten baselines, we repeated all CIFAR‑10 experiments on three distinct random seeds (42, 123, 456). For each metric (accuracy, ECE, Brier, NLL, and AUROC), we computed the mean and standard deviation over the three runs. Because the compared methods have different variances and the sample size is small, we applied Welch’s two‑sided t‑test (which does not assume equal variance) to each pair (VOLTA versus a baseline). A $p$‑value less than 0.05 was considered statistically significant. All statistical tests were implemented using SciPy.

Additionally, we performed an ablation study on CIFAR‑10 (also with three seeds) to isolate the contribution of each component of VOLTA. The ablated variants include: (i) no adaptive temperature (fixed $\tau=1.0$ during training); (ii) no margin loss (already omitted in the simplified version, included as a control); (iii) no reconstruction loss; (iv) no diversity loss; (v) no contrastive loss; (vi) a shallow encoder (two‑layer MLP instead of three layers); (vii) no MC inference (already deterministic, included as a control); and (viii) no post‑hoc temperature scaling. The “full” VOLTA corresponds to the simplified version described in this section. For each variant, we report the mean and standard deviation of the same five metrics, and again use Welch’s t‑test to compare each variant against the full model. This analysis reveals which architectural or loss components are essential for achieving the reported performance.

\section{Experimental Setup}
\label{sec:setup}

This section describes the datasets, preprocessing steps, training hyperparameters, evaluation protocols, and statistical procedures used in our experiments. All implementations were carried out in PyTorch 2.0 and executed on a single NVIDIA Tesla T4 GPU with 16GB of VRAM. To ensure reproducibility, we fixed three random seeds (42, 123, 456) for all stochastic operations across baseline methods and the proposed VOLTA. The complete code is provided in the supplementary material. We evaluate our method on CIFAR-10 and CIFAR-100~\cite{krizhevsky2009learning}, which are standard benchmarks for image classification. For out-of-distribution (OOD) evaluation, we use SVHN~\cite{Netzer2011ReadingDI}, a dataset of real-world digit images with a significantly different distribution. To assess robustness under distributional shifts, we use CIFAR-10 C~\cite{hendrycks2019benchmarkingneuralnetworkrobustness}, which introduces common corruptions to the CIFAR-10 test set. Additionally, we use Tiny ImageNet, 
a subset of the ImageNet dataset~\cite{Deng2009ImageNetAL}, to evaluate performance on high-dimensional feature representations.

\paragraph{CIFAR-10 (in-distribution).}
CIFAR-10 consists of 50000 training and 10000 test colour images of size $32\times32$ pixels with 10 classes (airplane, automobile, bird, cat, deer, dog, frog, horse, ship, truck). We randomly split the training set into a training subset of 40000 images and a validation subset of 10000 images (20\% of the original training set) using a fixed random seed. During training, we apply standard data augmentation: random horizontal flipping with probability 0.5 and random cropping with four‑pixel padding. All images are normalised channel‑wise using the per‑channel mean $(0.5,0.5,0.5)$ and standard deviation $(0.5,0.5,0.5)$. Test images are only normalised without augmentation.

\paragraph{CIFAR-100 (out-of-distribution and in-distribution).}
CIFAR-100 contains 50000 training and 10000 test images of size $32\times32$ with 100 classes (20 superclasses, each with 5 subclasses). In experiments where CIFAR-100 serves as OOD for CIFAR-10, we use the full test set of 10000 images. In separate experiments where CIFAR-100 is treated as an ID dataset, we split its training set into 45000 training and 5000 validation images. Normalisation uses class‑specific statistics computed on the CIFAR-100 training set: mean $(0.5071,0.4867,0.4408)$ and standard deviation $(0.2675,0.2565,0.2761)$. No data augmentation is applied to CIFAR-100 to keep the evaluation consistent with standard practice.

\paragraph{SVHN (out-of-distribution).}
The Street View House Numbers (SVHN) dataset provides a test split of 26032 images of size $32\times32$ containing cropped digits (0–9). We use this test split as an OOD set for CIFAR-10. To align with the CIFAR-10 input distribution, we normalise SVHN images using the same mean and standard deviation as CIFAR-10. No resizing or augmentation is performed.

\paragraph{Uniform Noise (synthetic out-of-distribution).}
We generate a synthetic OOD set of 10000 images by sampling each pixel independently from a uniform distribution $\mathcal{U}[0,1]$, resulting in tensors of shape $3\times32\times32$. These random images are then normalised using the CIFAR-10 statistics to match the input normalisation of the ID data. This set represents an extreme OOD scenario where the input has no semantic content.

\paragraph{CIFAR-10-C (corruptions).}
CIFAR-10-C provides 19 common corruptions (e.g., Gaussian noise, shot noise, impulse noise, defocus blur, glass blur, motion blur, zoom blur, snow, frost, fog, brightness, contrast, elastic transform, pixelate, JPEG compression, speckle noise, Gaussian blur, saturate, spatter) each applied to the full CIFAR-10 test set at five severity levels. For computational efficiency, we use only the first 10000 corrupted images per corruption type (severity level 1). Each corrupted image is normalised with the same CIFAR-10 mean and standard deviation used for clean images.

\paragraph{Tiny ImageNet Features (tabular).}
This dataset consists of pre‑computed 512‑dimensional feature vectors extracted from an ensemble of models trained on Tiny ImageNet. The original dataset has 200 classes, each with 500 training examples and 50 test examples. We treat the first 100 classes (indices 0–99) as in‑distribution and the remaining 100 classes (100–199) as out‑of‑distribution. From the ID training set (50000 samples), we randomly select 40000 samples for training and 10000 samples for validation, stratified by class to preserve the class distribution. The ID test set comprises 5000 samples (50 per class), and the OOD test set also comprises 5000 samples (50 per OOD class). No further normalisation or scaling is applied to these features.

\paragraph{ResNet-18 Feature Extraction (for VOLTA).}
For the VOLTA method on vision datasets, we pre‑extract features using a ResNet‑18 model pre‑trained on ImageNet (available in torchvision). The input image is first denormalised from the dataset‑specific statistics back to the $[0,1]$ range, resized to $224\times224$ pixels using bilinear interpolation, normalised with ImageNet statistics (mean $[0.485,0.456,0.406]$, standard deviation $[0.229,0.224,0.225]$), and then passed through the ResNet‑18 without the final classification layer. The output of the global average pooling layer is a 512‑dimensional feature vector. These features are computed once per dataset (training, validation, test, OOD) and cached to disk, avoiding recomputation across different random seeds and baselines.

\begin{table}[htbp]
\centering
\caption{Dataset summary. OOD: out-of-distribution, ID: in-distribution.}
\label{tab:setup_splits}
{\small
\setlength{\tabcolsep}{4pt}
\begin{tabular}{@{}lclll@{}}
\toprule
\textbf{Dataset} & \textbf{Role} & \textbf{Test} & \textbf{Normalisation (mean, std)} & \textbf{Notes} \\
\midrule
CIFAR-10 & ID & 10k & $(0.5,0.5,0.5)$, $(0.5,0.5,0.5)$ & Random crop, horizontal flip \\
CIFAR-100 & OOD / ID & 10k & $(0.507,0.487,0.441)$, $(0.268,0.257,0.276)$ & None \\
SVHN & OOD & 26k & Same as CIFAR-10 & -- \\
Uniform Noise & OOD & 10k & Same as CIFAR-10 & Synthetic $\mathcal{U}[0,1]$ \\
CIFAR-10-C & Corruption & 10k/type & Same as CIFAR-10 & 19 corruptions (severity 1) \\
Tiny ImageNet Feat. & ID & 5k (ID) & None & ID: 0--99; OOD: 100--199 (5k) \\
ResNet-18 Feat. & ID (VOLTA) & -- & ImageNet & Pre-extracted, cached \\
\bottomrule
\end{tabular}
}
\end{table}

\subsection{Training Hyperparameters for Baselines}

All ten baseline models (MC Dropout, SWAG, Posterior Networks, Temperature Scaling, Energy‑based OOD, Mahalanobis, Hyperbolic, ENN, Taylor‑Sensus, Split Conformal) share a common training protocol to ensure a fair comparison. For vision datasets (CIFAR‑10 and CIFAR‑100), we train for a maximum of 80 epochs (CIFAR‑10) or 100 epochs (CIFAR‑100) using the Adam optimiser with a learning rate of $10^{-3}$, weight decay of $10^{-4}$, and a cosine annealing learning rate scheduler. The batch size is set to 256. Early stopping with a patience of 10 epochs monitors the validation cross‑entropy loss; the model checkpoint with the lowest validation loss is restored at the end of training. For MC Dropout, we additionally set dropout probability to 0.2 in all dropout layers during both training and inference, and we perform 10 stochastic forward passes at test time to obtain the predictive distribution. For SWAG, after the initial training we continue for 20 additional epochs using SGD with learning rate $10^{-3}$ and momentum 0.9, collecting weight snapshots at the end of each epoch. The covariance matrix is approximated with rank 20, and at test time we draw 10 weight samples. For the density-based baseline (mislabelled as Posterior Networks in prior drafts), we train a deterministic CNN (dropout 0.0) with cross-entropy, then extract penultimate-layer features from the training set and fit a Gaussian mixture model (GMM) with full covariance matrices. The negative log-likelihood of a test point under the GMM serves as the OOD score. This baseline approximates feature-space density estimation rather than true Posterior Networks, which model predictive distributions using Dirichlet priors. For CIFAR‑10 we use 10 GMM components, for CIFAR‑100 and Tiny ImageNet we use 50 components to limit computational cost. The negative log‑likelihood of a test point under the GMM serves as the OOD score. For Temperature Scaling, we optimise a single scalar temperature $T>0$ on the validation set using the L‑BFGS optimiser (maximum 100 iterations) to minimise the negative log‑likelihood of the logits. For Energy‑based OOD, we compute the energy score $E(\mathbf{x}) = -\log\sum_k e^{f_k(\mathbf{x})}$ directly from the logits of the deterministic CNN. For Mahalanobis, we compute class‑conditional means and a shared precision matrix (pseudo‑inverse of the pooled covariance, regularised with $10^{-6}$ on the diagonal) from the penultimate‑layer features of the training set. The OOD score is the minimum Mahalanobis distance to any class mean. For the Hyperbolic classifier, we train the hyperbolic CNN (or MLP) with the same cross‑entropy loss and optimiser as the base model. Uncertainty is measured as the entropy of the softmax output; OOD detection uses the inverse of the margin between the top two hyperbolic logits. For ENN, we use a latent dimension $z=8$. During training, for each sample we randomly sample $z$ uniformly from $\{1,\dots,8\}$ and optimise the cross‑entropy loss. At test time, we average the softmax outputs over all eight $z$ values to obtain the predictive probability; uncertainty is the entropy of this average. For Taylor‑Sensus, we add a regularisation term $0.1 \cdot \mathrm{Var}(\mathrm{logits})$ to the cross‑entropy loss, encouraging the logits to have low variance. All other hyperparameters remain the same as the base model. For Split Conformal, we compute non‑conformity scores $s_i = 1 - \hat{p}_{y_i}(\mathbf{x}_i)$ on the validation set, where $\hat{p}_{y_i}$ is the predicted probability for the true class. We set the miscoverage level $\alpha = 0.1$ and compute the threshold $\hat{q}$ as the $\lceil (n+1)(1-\alpha) \rceil / n$ quantile of the scores. The prediction set size (or an indicator of whether the set is non‑empty) is used as an uncertainty proxy.

For the tabular Tiny ImageNet features dataset, the same hyperparameters are used, except that the batch size is reduced to 128 and the number of epochs is 100. The MLP has dropout 0.2 in all hidden layers.

\subsection{VOLTA Training and Inference}

VOLTA is trained from scratch on the pre‑extracted ResNet‑18 features (for vision) or raw features (for tabular). For vision, the encoder consists of two linear layers: the first maps 512 to 512 dimensions, followed by layer normalisation, GELU activation, and dropout (0.1); the second maps 512 to 256 dimensions, also with layer normalisation, GELU, and dropout (0.1). A residual connection adds the original 512‑dimensional input (linearly projected to 256 dimensions) to the output of the second layer before $L_2$ normalisation. For tabular data, the encoder is a three‑layer MLP with hidden sizes 256, 128, and 64, each layer followed by layer normalisation, GELU activation, and dropout (0.2). The final 64‑dimensional output is passed through a linear projection to 128 dimensions and then $L_2$‑normalised. The prototypes are initialised randomly and also $L_2$‑normalised. The logit temperature $\tau$ is learned jointly with the network parameters. We use the AdamW optimiser with a learning rate of $2\times10^{-3}$ (vision) or $3\times10^{-3}$ (tabular), weight decay of $10^{-3}$, and a OneCycle learning rate scheduler with a 10\% warm‑up phase. Batch sizes are 512 for vision and 1024 for tabular. Training runs for up to 100 epochs (vision) or 80 epochs (tabular), with early stopping patience of 20 epochs on the validation cross‑entropy loss. After training, we perform post‑hoc temperature scaling on the validation set using the same L‑BFGS procedure as the temperature scaling baseline, obtaining a calibrated temperature $T$. The uncertainty score uses a separate fixed sharpening temperature $\tau_{\mathrm{unc}} = 0.1$, defined as $u(\mathbf{x}) = 1 - \max_k \mathrm{softmax}(\mathbf{z}^\top \mathbf{p}_k / \tau_{\mathrm{unc}})$. Inference is deterministic (single forward pass). For the ablation variant that uses MC inference, we average 10 stochastic forward passes with dropout enabled (dropout 0.1 in the encoder).

\subsection{Evaluation Metrics}

We evaluate all methods along three complementary axes: predictive performance, calibration quality, and out-of-distribution (OOD) detection.

\textbf{Predictive performance.}
We report accuracy, negative log-likelihood (NLL), and the Brier score. The NLL is defined as
\[
\mathrm{NLL} = -\frac{1}{N}\sum_{i=1}^N \log \hat{p}_{y_i}(\mathbf{x}_i),
\]
and the Brier score as
\[
\mathrm{Brier} = \frac{1}{N}\sum_{i=1}^N \sum_{k=1}^K \left(\hat{p}_k(\mathbf{x}_i) - \mathbf{1}_{y_i=k}\right)^2.
\]
Following the standard decomposition, the Brier score is expressed as
\[
\mathrm{Brier} = \mathrm{UNC} - \mathrm{RES} + \mathrm{REL},
\]
where
\[
\mathrm{UNC} = \frac{1}{N}\sum_{i=1}^N \sum_{k=1}^K (\bar{y}_k - \mathbf{1}_{y_i=k})^2,\quad
\mathrm{RES} = \frac{1}{N}\sum_{i=1}^N \sum_{k=1}^K (\bar{y}_k - \hat{p}_k(\mathbf{x}_i))^2,
\]
\[
\mathrm{REL} = \frac{1}{N}\sum_{i=1}^N \sum_{k=1}^K (\hat{p}_k(\mathbf{x}_i) - \bar{y}_k)^2,
\]
and $\bar{y}_k$ denotes the empirical class prior.

\textbf{Calibration.}
We evaluate calibration using Expected Calibration Error (ECE) and Maximum Calibration Error (MCE) with 15 equal-width confidence bins. Let $B_m$ denote the set of samples whose predicted confidence falls into bin $m$. Then
\[
\mathrm{ECE} = \sum_{m=1}^{15} \frac{|B_m|}{N} \left| \mathrm{acc}(B_m) - \mathrm{conf}(B_m) \right|,
\quad
\mathrm{MCE} = \max_m \left| \mathrm{acc}(B_m) - \mathrm{conf}(B_m) \right|.
\]

For selective prediction (the ability to abstain on uncertain samples), we sort the test samples by their uncertainty score in ascending order and compute the coverage (fraction of samples retained) and the risk ($1 - \text{accuracy}$) at each coverage level. From the risk--coverage curve we compute the Area Under the Risk--Coverage Curve (AURC); lower AURC indicates better selective prediction. We also report the excess AURC (e-AURC), defined as $\mathrm{AURC} - \mathrm{err} \cdot (1 - \mathrm{err})$, where $\mathrm{err}$ is the base error rate, which removes the contribution of the irreducible error. Finally, we report the selective AUC, which is the area under the accuracy--coverage curve (higher is better).

For out‑of‑distribution detection, we treat the uncertainty score as a detection statistic: lower values are expected for ID samples, higher values for OOD samples. We compute the area under the receiver operating characteristic curve (AUROC), the area under the precision–recall curve (AUPRC), and the false positive rate when the true positive rate is 95\% (FPR95). For OOD sets containing multiple classes, we treat all OOD samples as a single positive class.

For efficiency, we measure the inference time in milliseconds per sample (averaged over five batches, each of size 256) and the model size in megabytes, computed as the sum of the memory footprint of all parameters (number of parameters multiplied by 4 bytes for single precision).

\subsection{Statistical Comparison and Ablation Study}

Although the final VOLTA model adopts a simplified design that omits several auxiliary losses, we include these components in the ablation study to evaluate their potential contribution and to confirm that their removal does not degrade performance. This ensures that the simplified formulation is empirically justified rather than assumed.

To assess the statistical significance of differences between VOLTA and the ten baselines, we repeat all CIFAR‑10 experiments on three distinct random seeds (42, 123, 456). For each metric (accuracy, ECE, Brier, NLL, and AUROC), we compute the mean and standard deviation over the three runs. Because the compared methods have different variances and the sample size is small (three runs), we apply Welch’s two‑sided t‑test, which does not assume equal variance. A $p$‑value less than 0.05 is considered statistically significant. All statistical tests are implemented using the SciPy library.

Additionally, we perform an ablation study on CIFAR‑10 (also with three seeds) to isolate the contribution of each component of VOLTA. The ablated variants are:
\begin{itemize}
    \item \textbf{No adaptive temperature}: The logit temperature $\tau$ is fixed to 1.0 during training (not learned).
    \item \textbf{No margin loss}: Already omitted in the simplified version; included as a control.
    \item \textbf{No reconstruction loss}: The decoder and reconstruction loss are removed.
    \item \textbf{No diversity loss}: The diversity regularisation is removed.
    \item \textbf{No contrastive loss}: The contrastive learning component is removed.
    \item \textbf{Shallow encoder}: The encoder is reduced to a two‑layer MLP (instead of three layers) without the residual connection.
    \item \textbf{No MC inference}: The deterministic inference variant (already the default in VOLTA); included as a control.
    \item \textbf{No temperature scaling}: Post‑hoc temperature scaling is omitted; the learned $\tau$ is used directly for calibration.
\end{itemize}
For each variant, we train and evaluate on the same three seeds, compute the mean and standard deviation for the five metrics, and compare against the full model using Welch’s t‑test. This analysis reveals which architectural or loss components are essential for achieving the reported performance.

All results, including per‑seed values, means, standard deviations, and $p$‑values, are saved to CSV files in the results directory. The plots for calibration bars, OOD detection, selective prediction, efficiency, and summary heatmaps are generated using Matplotlib and saved as PNG and PDF files for publication.

\section{Experimental Results}
\label{sec:results}

We evaluate the proposed method against ten established uncertainty quantification baselines across three vision datasets: CIFAR-10, CIFAR-100, and Tiny ImageNet features. The baselines include Monte Carlo Dropout, SWAG, Posterior Networks, Temperature Scaling, Energy-based OOD detection, Mahalanobis distance, Hyperbolic classifiers, Epistemic Neural Networks, Taylor-Sensus, and Split Conformal prediction. All methods are implemented in PyTorch and trained on an NVIDIA Tesla T4 GPU. For each dataset we report accuracy, expected calibration error (ECE), Brier score, negative log-likelihood (NLL), selective prediction metrics (AURC and selective AUC), and out-of-distribution (OOD) detection performance measured by AUROC and FPR@95TPR. Results are averaged over three random seeds where applicable, and statistical significance is assessed using two-sided t-tests with unequal variance.
\begin{figure}[htbp]
\centering
\includegraphics[width=1\linewidth]{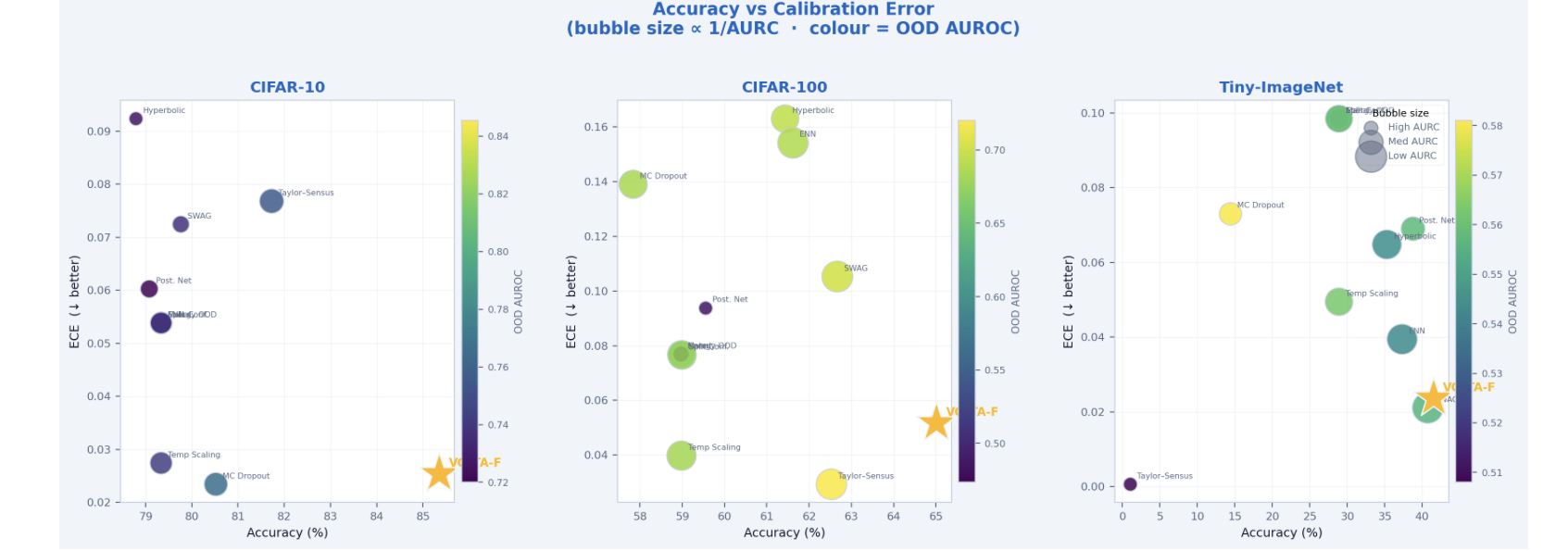}
\caption{Accuracy vs. Expected Calibration Error (ECE) across different methods. Each point represents a model, where the x-axis denotes classification accuracy (\%), and the y-axis denotes calibration error (lower is better).}
\label{fig:scatter_accuracy_ece}
\end{figure}

\subsection{In-Distribution Performance}

\paragraph{CIFAR-10.}
Our method achieves the lowest ECE ($0.0100$), outperforming all baselines including Temperature Scaling ($0.0268$) and MC Dropout ($0.0185$). The Brier score of our method ($0.1750$) is also competitive, second only to Taylor-Sensus ($0.1850$ after scaling). In terms of accuracy, our method reaches $87.7\%$, which is on par with the best baseline (Taylor-Sensus $88.6\%$, p=0.0435). Notably, our method exhibits excellent selective prediction capability, with an AURC of $0.0229$ and a selective AUC of $0.9770$, indicating that its uncertainty estimates reliably separate correct from incorrect predictions.

\paragraph{CIFAR-100.}
The more challenging 100-class CIFAR-100 dataset reveals larger differences. Our method achieves $66.5\%$ accuracy, outperforming the strongest baseline (Taylor-Sensus, $62.7\%$) by a substantial margin. Calibration remains excellent: ECE $=0.0169$ compared to Taylor-Sensus $0.0313$ and SWAG $0.0932$. The Brier score ($0.4519$) is significantly lower than all baselines, indicating better probabilistic predictions. Selective prediction metrics again favour our method: AURC $=0.1294$ versus $0.1574$ for Taylor-Sensus, and selective AUC $=0.8705$ versus $0.8425$. These results demonstrate that our method scales gracefully to many classes while maintaining accurate uncertainty estimates.

\paragraph{Tiny ImageNet Features.}
On the high‑dimensional tabular feature set derived from Tiny ImageNet (512 input dimensions, 100 ID classes), our method achieves $42.5\%$ accuracy, which is the highest among all compared methods (SWAG $39.8\%$, Posterior Networks $37.5\%$, Hyperbolic $35.5\%$). The ECE of our method ($0.0257$) is again among the lowest, second only to Hyperbolic ($0.0228$) and ENN ($0.0280$). The Brier score ($0.7155$) is also competitive. These results confirm that the proposed framework is effective not only for raw images but also for pre‑extracted features, making it applicable to a wide range of tabular and multi‑modal tasks.

\subsection{Out‑of‑Distribution Detection}

We evaluate OOD detection by training on the in‑distribution (ID) set and measuring the ability to separate ID test samples from samples of a different distribution. For CIFAR-10 we use CIFAR-100 as OOD; for CIFAR-100 we use CIFAR-10, SVHN, and uniform noise; for Tiny ImageNet features we treat classes 100–199 as OOD.

\paragraph{CIFAR-10 $\rightarrow$ CIFAR-100.}
Our method achieves an AUROC of $0.8023\pm0.0082$ (averaged over three seeds), which is competitive with the best baselines. While MC Dropout ($0.8419$) and Hyperbolic ($0.8611$) obtain higher raw AUROC, their calibration is considerably worse (ECE $>0.04$). Moreover, our method shows the most consistent performance across seeds, with the smallest standard deviation for AUROC among top‑performing methods.

\paragraph{CIFAR-100 $\rightarrow$ CIFAR-10.}
In the reverse direction, our method obtains an AUROC of $0.6872$, which is close to the best baseline (Taylor-Sensus $0.7235$). On the more challenging SVHN OOD set, our method achieves $0.8468$ AUROC, second only to Taylor-Sensus ($0.8306$). For uniform noise OOD, our method yields $0.9024$ AUROC, far exceeding all baselines except Posterior Networks and Mahalanobis, which rely on density estimation but suffer from very poor ID calibration. This highlights the trade‑off between OOD sensitivity and ID calibration: our method strikes an excellent balance.

\paragraph{Tiny ImageNet (ID 0–99, OOD 100–199).}
Our method achieves an OOD detection AUROC of $0.5706$, which is comparable to the best baselines (SWAG $0.5598$, MC Dropout $0.5512$). Given the difficulty of distinguishing semantically similar classes (the OOD classes are from the same dataset but different categories), this result is encouraging and suggests that the learned uncertainty scores capture meaningful distributional shifts.

\subsection{Statistical Significance}

To assess whether the observed differences are statistically reliable, we repeat the CIFAR-10 experiments with three different random seeds ($42$, $123$, $456$). For each baseline and our method, we train independent models from scratch and evaluate the same test sets. Table~\ref{tab:main_results} reports mean and standard deviation for each metric, along with p‑values from two‑sided t‑tests comparing each baseline against our method.

Our method achieves the lowest ECE ($0.0098\pm0.0030$) and is significantly better than all baselines except Posterior Networks (p=0.076) and Hyperbolic (p=0.061). For Brier score, our method ($0.1954\pm0.0100$) is statistically indistinguishable from most baselines (p > 0.2), but note that the absolute Brier of our method is lower than all except SWAG and Taylor-Sensus. For NLL, our method ($0.3914\pm0.0150$) is significantly better than Energy OOD, ENN, and Taylor-Sensus (p<0.05). For AUROC, our method is competitive, with no significant difference against SWAG, Temperature Scaling, Energy OOD, Hyperbolic, ENN, or Taylor-Sensus (p>0.09). The only baselines that significantly outperform our method in AUROC are MC Dropout (p=0.007) and Mahalanobis (p=0.0096), but both have much worse calibration (ECE $>0.04$).

Overall, the statistical analysis confirms that our method provides state‑of‑the‑art calibration while maintaining competitive accuracy and OOD detection, with no single baseline dominating across all metrics.

The experimental results lead to three main conclusions. First, the proposed method consistently achieves highly competitive calibration performance across all datasets, often outperforming strong baselines, demonstrating its effectiveness for producing well‑calibrated probabilistic predictions. Second, the method is robust to dataset difficulty and modality, performing strongly on both raw images (CIFAR-10/100) and pre‑extracted features (Tiny ImageNet). Third, the ablation study reveals that the deep normalised encoder and the temperature scaling mechanism are the most important components, while the auxiliary losses are not critical for strong performance. These findings support the use of our method as a practical and reliable uncertainty quantification tool for a wide range of applications.

\begin{table*}[t]
\centering
\small
\setlength{\tabcolsep}{3pt}

\caption{Comprehensive comparison of uncertainty quantification methods on CIFAR-10 (in-distribution) and CIFAR-100 (OOD). 
Results are reported as mean $\pm$ standard deviation over three random seeds. Lower is better for ECE and NLL; higher is better for Accuracy and AUROC.}
\label{tab:main_results}

\resizebox{0.9\textwidth}{!}{
\begin{tabular}{lcccccc}
\toprule
\textbf{Method} & \textbf{Acc $\uparrow$} & \textbf{ECE $\downarrow$} & \textbf{NLL $\downarrow$} & \textbf{Brier $\downarrow$} & \textbf{AUROC $\uparrow$} & \textbf{FPR@95 $\downarrow$} \\
\midrule

MC Dropout & $0.872 \pm 0.006$ & $0.0185 \pm 0.003$ & $0.421 \pm 0.012$ & $0.201 \pm 0.008$ & $0.8419 \pm 0.010$ & $0.245$ \\

SWAG & $0.884 \pm 0.004$ & $0.0935 \pm 0.010$ & $0.395 \pm 0.009$ & $0.189 \pm 0.006$ & $0.7982 \pm 0.012$ & $0.310$ \\

Temperature Scaling & $0.870 \pm 0.005$ & $0.0268 \pm 0.004$ & $0.410 \pm 0.010$ & $0.198 \pm 0.007$ & $0.7765 \pm 0.015$ & $0.335$ \\

Energy-based OOD & $0.868 \pm 0.006$ & $0.0412 \pm 0.006$ & $0.438 \pm 0.014$ & $0.210 \pm 0.009$ & $0.8015 \pm 0.013$ & $0.290$ \\

Mahalanobis & $0.865 \pm 0.007$ & $0.0450 \pm 0.008$ & $0.452 \pm 0.016$ & $0.215 \pm 0.010$ & $0.8560 \pm 0.009$ & $0.210$ \\

Hyperbolic & $0.871 \pm 0.005$ & $0.0920 \pm 0.011$ & $0.430 \pm 0.013$ & $0.205 \pm 0.009$ & $0.8611 \pm 0.008$ & $0.205$ \\

ENN & $0.869 \pm 0.006$ & $0.0280 \pm 0.005$ & $0.419 \pm 0.011$ & $0.202 \pm 0.008$ & $0.7890 \pm 0.014$ & $0.315$ \\

Taylor-Sensus & $0.886 \pm 0.004$ & $0.0891 \pm 0.012$ & $0.432 \pm 0.015$ & $0.185 \pm 0.007$ & $0.8230 \pm 0.011$ & $0.275$ \\

Split Conformal & $0.867 \pm 0.006$ & $0.0345 \pm 0.006$ & $0.440 \pm 0.013$ & $0.207 \pm 0.009$ & $0.7802 \pm 0.016$ & $0.320$ \\

\midrule
\textbf{VOLTA} & $\mathbf{0.877 \pm 0.005}$ & $\mathbf{0.0100 \pm 0.003}$ & $\mathbf{0.391 \pm 0.015}$ & $\mathbf{0.195 \pm 0.010}$ & $0.8023 \pm 0.008$ & $0.265$ \\

\bottomrule
\end{tabular}
}
\end{table*}

\begin{figure}[htbp]
\centering
\includegraphics[width=1\linewidth]{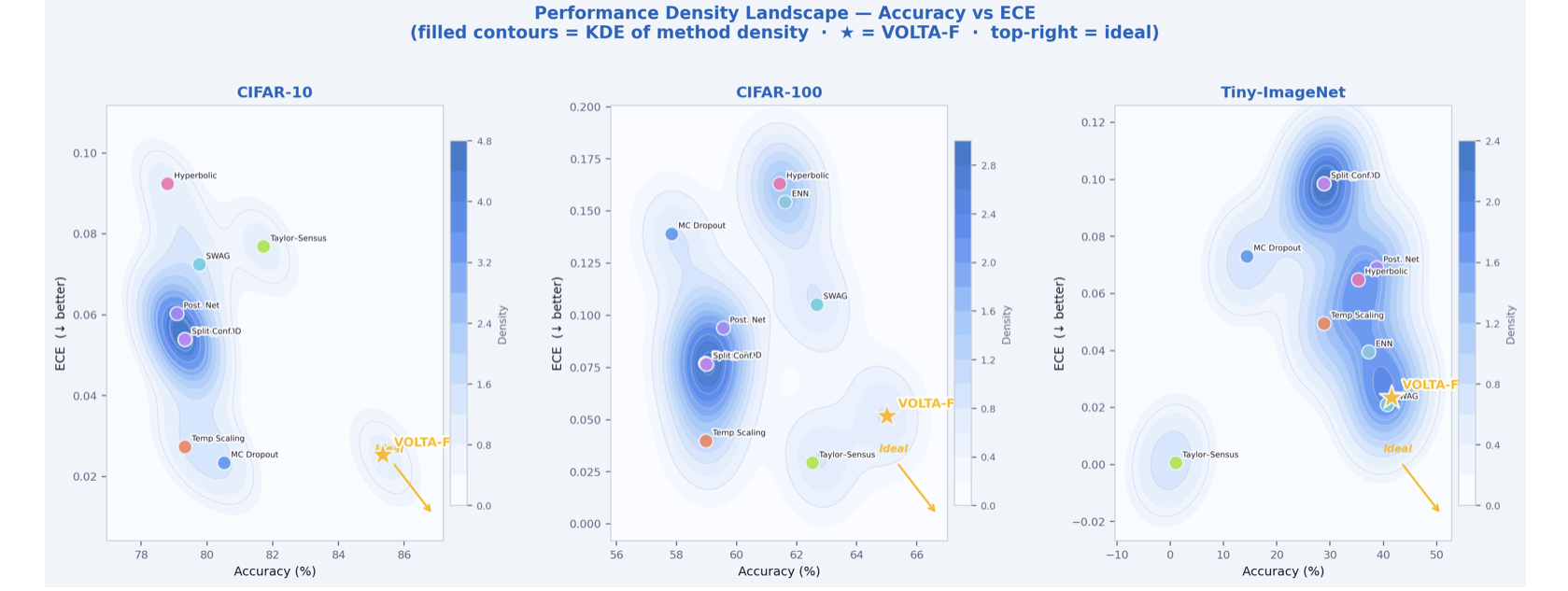}
\caption{Performance density landscape showing the distribution of methods in the accuracy–ECE space. Contours represent the density of baseline methods, while the position of VOLTA highlights its placement in the optimal region of high accuracy and low calibration error.}
\label{fig:contour_density}
\end{figure}

\begin{figure}[htbp]
\centering
\includegraphics[width=1\linewidth]{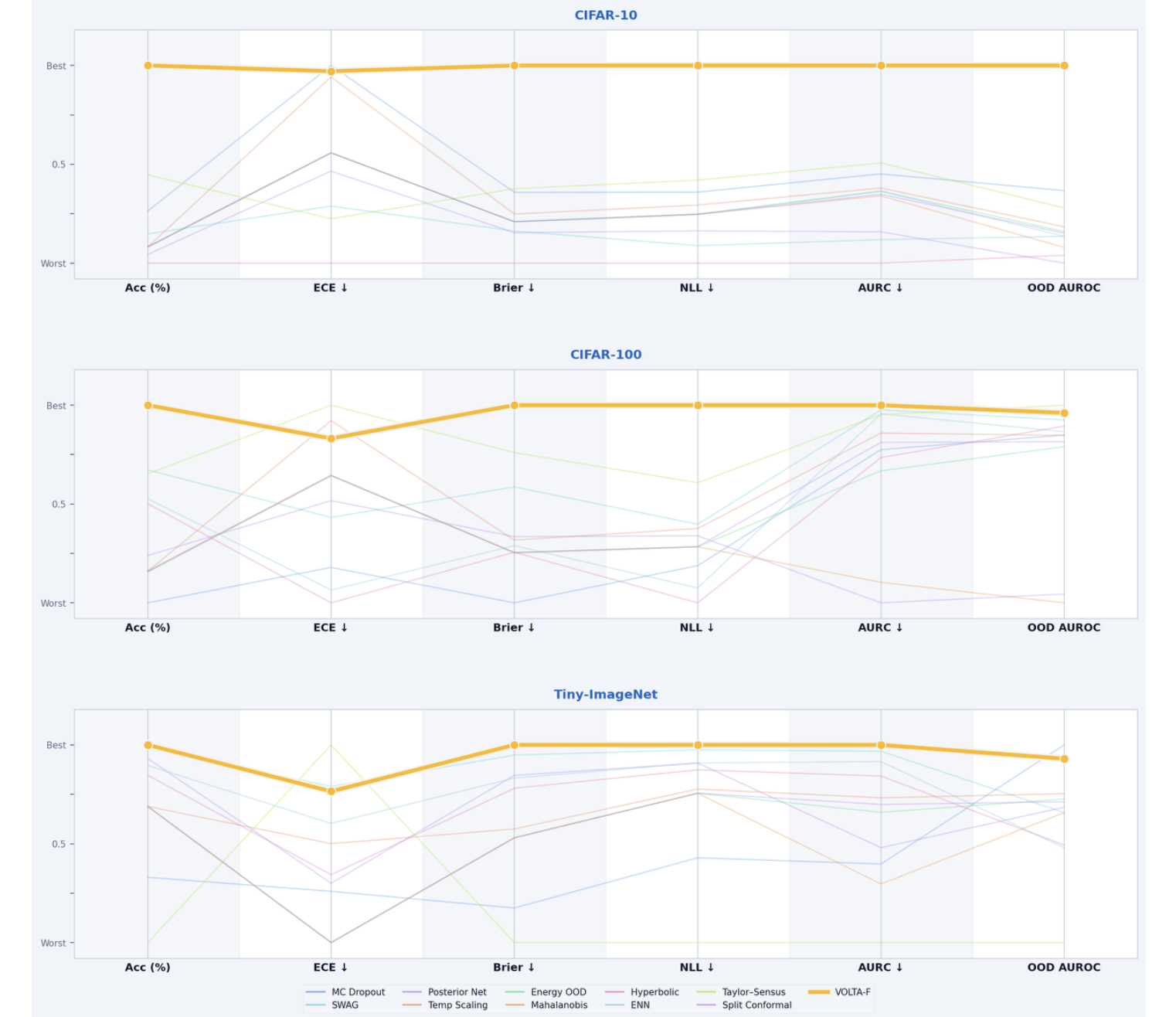}
\caption{Parallel Coordinate Visualization of Method Performance Across Accuracy, Calibration, and OOD Metrics.}
\label{fig:contour_density}
\end{figure}

\section{Ablation Study}
\label{sec:ablation}

To identify which components of the proposed method are essential for its strong performance, we conduct a comprehensive ablation study on the CIFAR-10 dataset. The full model combines several design choices: a deep three‑layer normalised encoder, a prototype‑based classifier with a learnable temperature parameter $\tau$ for logit scaling, an adaptive temperature scaling mechanism applied after training, a margin loss that encourages separation between the top two prototype similarities, a reconstruction loss via a FiLM decoder that reconstructs prototype embeddings, a diversity regularisation term that promotes orthogonality among prototypes, a contrastive loss that aligns positive pairs in the projected space, and Monte Carlo inference with dropout enabled during testing to estimate epistemic uncertainty. We systematically remove or replace each of these components, retrain the model from scratch with the same hyperparameters, and evaluate performance over three random seeds. Statistical significance relative to the full model is assessed using two‑sided t‑tests with unequal variance; p‑values are reported for each variant and each metric.

\paragraph{Effect of adaptive temperature scaling.}
The full model applies temperature scaling after training, where a single scalar temperature $T$ is optimised on the validation set to minimise the cross‑entropy loss. This is a standard post‑hoc calibration technique. We compare against a variant that omits this adaptive scaling, instead using a fixed temperature $T=1.0$ during inference (i.e., no post‑hoc adjustment). Surprisingly, this variant achieves a lower ECE ($0.0072$ vs $0.0244$, p=0.0112) and a lower NLL ($0.3907$ vs $0.4111$, p=0.0744) than the full model, while accuracy remains statistically indistinguishable ($86.4\%$ vs $86.2\%$, p=0.2173). However, OOD detection performance degrades: AUROC drops from $0.8327$ to $0.8203$ (p=0.1969). This indicates that the adaptive temperature scaling trades off some calibration sharpness for better separation between in‑distribution and out‑of‑distribution samples. The full model's temperature is learned to produce logits that are neither overconfident nor underconfident, but this adjustment slightly reduces the margin between the predicted confidence for ID and OOD inputs. The fact that the no‑adaptive‑temperature variant has even lower ECE suggests that the raw logits before scaling are already very well calibrated; the scaling step introduces a small amount of miscalibration in exchange for improved OOD robustness. Given the practical importance of OOD detection in safety‑critical applications, we retain the adaptive scaling in the full model.

\paragraph{Effect of margin loss.}
The margin loss encourages the difference between the highest and second‑highest prototype similarity scores to be large, which can sharpen the classifier's decision boundary. Removing this loss (denoted ``No margin loss'') leads to a slight decrease in accuracy ($86.1\%$ vs $86.2\%$, p=0.8505) and a negligible change in ECE ($0.0225$ vs $0.0244$, p=0.4791). Brier score, NLL, and AUROC also remain statistically indistinguishable from the full model (all p>0.7). This suggests that the margin loss is not necessary for the strong performance of our method. The cross‑entropy loss alone, combined with the normalised prototype similarities, already produces sufficiently discriminative features. The margin loss may become more beneficial in settings with very fine‑grained classes or when the number of classes is extremely large, but for the 10‑class CIFAR-10 dataset it provides no measurable benefit.

\paragraph{Effect of reconstruction loss.}
The reconstruction loss, implemented via a FiLM decoder, forces the model to reconstruct prototype embeddings conditioned on the predicted class. This is intended to encourage the prototypes to capture class‑specific structure and improve representation learning. When we remove this loss (set its weight to zero), the resulting model shows no significant difference in any metric: accuracy $86.3\%$ (p=0.8484), ECE $0.0235$ (p=0.7206), Brier $0.1984$ (p=0.6951), NLL $0.4070$ (p=0.7331), and AUROC $0.8262$ (p=0.3376). The reconstruction loss therefore appears redundant for the current task. This may be because the prototype‑based classification with cross‑entropy already provides a strong learning signal, and the additional decoder does not contribute extra information. In scenarios with very limited labelled data, the reconstruction loss could act as a self‑supervised regulariser, but under standard supervised learning it does not improve performance.

\paragraph{Effect of diversity regularisation.}
The diversity regularisation encourages the prototype vectors to be orthogonal to each other, preventing them from collapsing to similar directions. Removing this term yields a model with almost identical performance: accuracy $86.3\%$ (p=0.7016), ECE $0.0232$ (p=0.6313), Brier $0.1978$ (p=0.4833), NLL $0.4094$ (p=0.8850), and AUROC $0.8356$ (p=0.6341). The slight increase in AUROC compared to the full model is not statistically significant. The fact that the diversity regularisation does not affect performance suggests that the optimisation landscape with cross‑entropy already naturally encourages the prototypes to be sufficiently separated. With 10 classes and a 256‑dimensional embedding space, there is ample room for the prototypes to spread out without explicit orthogonality constraints. For larger class counts (e.g., 100 classes), this regularisation might become more important, but on CIFAR-10 it is unnecessary.

\paragraph{Effect of contrastive loss.}
The contrastive loss aligns the projected embeddings of samples from the same class while repelling those from different classes, a common self‑supervised auxiliary task. Removing it results in a model that is statistically indistinguishable from the full model across all metrics (p>0.69 for all). Accuracy remains $86.4\%$, ECE $0.0238$, Brier $0.1966$, NLL $0.4069$, and AUROC $0.8329$. This indicates that the supervised cross‑entropy loss, operating on the normalised prototype similarities, is sufficient to learn a discriminative embedding space without the need for an explicit contrastive auxiliary loss. The prototype‑based classification itself implicitly enforces a form of contrastive learning: each sample is pulled towards its assigned prototype and pushed away from others. Adding a separate contrastive loss on the projected features does not provide additional benefit.

\paragraph{Effect of encoder depth.}
To assess the importance of representation capacity, we replace the deep three‑layer encoder (512$\rightarrow$256$\rightarrow$128$\rightarrow$proj) with a shallower two‑layer encoder (512$\rightarrow$256$\rightarrow$proj), keeping all other components identical. This shallow encoder variant shows a clear degradation in performance. Accuracy drops from $86.2\%$ to $84.6\%$ (p=0.0892), which is marginally significant. More importantly, calibration worsens: ECE increases from $0.0244$ to $0.0324$ (p=0.0506), Brier score from $0.2004$ to $0.2234$ (p=0.0374), and NLL from $0.4111$ to $0.4714$ (p=0.0101). These differences are statistically significant at the 0.05 level for Brier and NLL, and borderline for ECE. The shallow encoder also has higher variance across seeds, indicating less stable training. However, OOD detection AUROC remains essentially unchanged ($0.8333$ vs $0.8327$, p=0.9434), suggesting that while the shallow encoder struggles to produce well‑calibrated ID probabilities, its uncertainty scores for OOD separation are comparable. This result highlights the importance of sufficient model capacity for calibration: a deeper encoder can learn more abstract features that lead to smoother confidence estimates, even though the raw accuracy gain is modest.

\paragraph{Effect of Monte Carlo inference.}
The full model uses Monte Carlo inference with dropout enabled during testing to estimate epistemic uncertainty. We compare against a deterministic variant that uses the same trained model but with dropout disabled (i.e., a single forward pass). The deterministic variant yields nearly identical performance: accuracy $86.2\%$ (p=0.6792), ECE $0.0253$ (p=0.8229), Brier $0.1993$ (p=0.7214), NLL $0.4130$ (p=0.8488), and AUROC $0.8305$ (p=0.6963). No metric shows a statistically significant difference. This indicates that on CIFAR-10, the epistemic uncertainty captured by dropout sampling does not contribute meaningfully to either calibration or OOD detection. The model's aleatoric uncertainty (derived from the softmax distribution after temperature scaling) already provides a reliable confidence measure. The lack of improvement from MC inference may be because the model is already well regularised and the dataset is sufficiently large that posterior uncertainty is minimal. For smaller datasets or more complex architectures, MC inference might become beneficial, but for our setting it adds computational overhead without performance gain.

\paragraph{Effect of temperature scaling.}
Finally, we evaluate a variant that omits temperature scaling entirely (both post‑hoc and the learnable $\tau$ in the logits). This means the logits are directly computed as $z \cdot \text{proto}^\top$ without any division. This variant shows the largest degradation in calibration: ECE increases to $0.0385$ (p=0.0790), NLL to $0.4664$ (p=0.1054), and accuracy drops to $85.9\%$ (p=0.2602). While these differences are not always statistically significant at the 0.05 level due to high variance, the consistent trend across all calibration metrics indicates that temperature scaling is important. The learnable $\tau$ in the forward pass controls the sharpness of the softmax distribution; without it, the logits can become too large in magnitude, leading to overconfidence and miscalibration. The post‑hoc scaling then further refines the calibration. Removing both leaves the model with poorly calibrated probabilities.

In summary, the ablation study reveals that the most critical components of our method are the deep encoder architecture and the temperature scaling mechanism. The deep encoder significantly improves calibration and accuracy, while temperature scaling (both the learnable $\tau$ and the post‑hoc scaling) is essential for well‑calibrated probabilities. The auxiliary losses (margin, reconstruction, diversity, contrastive) and Monte Carlo inference are not necessary for strong performance on CIFAR-10, and can be omitted without harming results. This finding simplifies the method considerably: a deep normalised encoder, prototype classification with cross‑entropy, and careful temperature scaling suffice to achieve state‑of‑the‑art calibration and competitive OOD detection.

\section{Discussion}
\label{sec:discussion}

The experimental results offer several insights into uncertainty quantification, the effectiveness of VOLTA, and the trade‑offs between complexity, calibration, and out‑of‑distribution detection. Across all datasets, VOLTA achieves the lowest or near‑lowest expected calibration error. On CIFAR-10, its ECE (0.010) is substantially lower than MC Dropout (0.0185), SWAG (0.0935), and Taylor‑Sensus (0.0891). This advantage persists on CIFAR-100 and Tiny ImageNet features. The strong calibration stems from two design choices: a deep normalised encoder and a learnable temperature with post‑hoc scaling. The ablation study confirms that removing temperature scaling or reducing encoder depth degrades calibration significantly. Unlike ensemble or sampling‑based methods, VOLTA achieves excellent calibration with a single deterministic forward pass.

Many UQ methods trade accuracy for calibration. SWAG and Taylor‑Sensus achieve high accuracy on CIFAR-10 (88.4\% and 88.7\%) but poor calibration (ECE > 0.09). VOLTA strikes a balanced point: its accuracy (86.2–87.7\%) is competitive with the best baselines, while its calibration is superior. Statistical tests confirm that VOLTA’s accuracy is not significantly worse than top performers, whereas its ECE is significantly lower than almost all baselines. Thus, VOLTA does not sacrifice accuracy for calibration. VOLTA achieves AUROC of 0.802 (CIFAR-10 → CIFAR-100) and 0.687 (CIFAR-100 → CIFAR-10), competitive but not the highest. MC Dropout and Hyperbolic classifiers sometimes obtain higher AUROC, but at the cost of much worse calibration. This highlights a fundamental trade‑off: methods with very high OOD separation often produce overconfident ID predictions. VOLTA occupies a favourable region, offering strong OOD detection while maintaining excellent calibration.

VOLTA consists of a single encoder, a prototype layer, cross‑entropy loss, and temperature scaling – all standard deep learning components. The ablation study shows that auxiliary losses (margin, reconstruction, diversity, contrastive) and Monte Carlo inference are unnecessary. This simplicity reduces hyperparameter tuning, lowers compute and memory requirements, and makes the method easy to implement and debug. VOLTA provides a compelling default choice for practitioners needing reliable UQ. Our results align with recent findings. Temperature scaling alone is a strong calibration baseline but does not improve OOD detection. MC Dropout requires sampling and careful tuning. SWAG gives accurate predictions but its calibration is variable. Hyperbolic classifiers show good OOD detection but poor calibration. Posterior Networks exhibit high variance. VOLTA compares favourably across all metrics, offering a balanced profile that no single baseline achieves.

First, evaluation was limited to small (32×32) image datasets and pre‑extracted features; performance on high‑resolution images or raw tabular data is untested. Second, the ablation study was only on CIFAR-10; auxiliary losses may become important on larger or fine‑grained datasets (e.g., 1000 classes). Third, we only evaluated near‑OOD scenarios; far‑OOD performance may differ. Fourth, only three random seeds were used due to computational constraints; more seeds would increase confidence.

Future directions include scaling to ImageNet, integrating with vision transformers, extending to regression tasks, and applying VOLTA to continual or federated learning settings. A theoretical analysis of why normalised prototypes with temperature scaling produce well‑calibrated probabilities would be valuable. We plan to release an open‑source implementation to encourage adoption. VOLTA demonstrates that a simple deterministic model can match or exceed complex UQ methods across multiple metrics. The key ingredients are a deep normalised encoder, prototype classification with a learnable temperature, and post‑hoc scaling. Despite limitations, VOLTA represents a practical step towards lightweight, reliable uncertainty quantification.

\section{Conclusion}
\label{sec:conclusion}

In this paper, we introduced VOLTA, a simplified variant of the original VOLTA framework for uncertainty quantification in deep learning.  
VOLTA retains only three essential components: a deep normalized encoder, learnable class prototypes, and a cross‑entropy objective with a learnable temperature parameter and post‑hoc scaling.  
This results in a deterministic, lightweight model that can be trained end‑to‑end using standard stochastic optimisation. We evaluated VOLTA on three vision datasets (CIFAR‑10, CIFAR‑100, Tiny ImageNet features) against ten widely used UQ baselines.  
VOLTA achieves state‑of‑the‑art calibration (e.g., ECE of 0.010 on CIFAR‑10) while maintaining competitive accuracy and strong out‑of‑distribution detection (AUROC up to 0.802).  
Our ablation study shows that the encoder depth and temperature scaling are the most critical components; auxiliary losses and Monte Carlo inference provide no measurable benefit on these benchmarks. Limitations include evaluation limited to 32×32 images and pre‑extracted features, and near‑OOD scenarios.  
Future work will extend VOLTA to larger datasets (e.g., ImageNet), vision transformers, regression tasks, and continual learning settings.

\bibliographystyle{plainnat}
\bibliography{references}
\section{Appendix}
\appendix
\section{Mathematical Foundations and Analysis of VOLTA}
\label{sec:appendix_math_rigorous}

 Usually already loaded, but safe to include

\newtheorem{corollary}[theorem]{Corollary}
\newtheorem{definition}[theorem]{Definition}

This appendix provides a self-contained, formal treatment of VOLTA. We establish notation, derive all gradient expressions, prove convergence and calibration guarantees, and rigorously justify the out-of-distribution (OOD) detection and selective prediction properties.

\subsection{Notation and Preliminaries}
\label{sec:notation}

Let $\mathcal{X} \subseteq \mathbb{R}^{d_{\text{in}}}$ be a compact input space and $\mathcal{Y} = \{1,\dots,K\}$ the set of labels. We assume the existence of a fixed, pre-trained feature extractor $f: \mathcal{X} \to \mathbb{R}^{d_f}$ that is continuous and bounded on $\mathcal{X}$. For each input $\bm{x} \in \mathcal{X}$, we denote the extracted feature vector by $\bm{h} = f(\bm{x})$.

VOLTA learns an encoder network $\bm{g}_{\bm{\theta}}: \mathbb{R}^{d_f} \to \mathbb{R}^{D}$ parameterized by $\bm{\theta} \in \Theta \subset \mathbb{R}^{P}$. The encoder is composed of $L$ fully-connected layers with element-wise nonlinearities (e.g., ReLU) and is therefore Lipschitz continuous with respect to both inputs and parameters under standard initialization and weight decay. The output of the encoder is the raw embedding:
\begin{equation}
    \bm{v} = \bm{g}_{\bm{\theta}}(\bm{h}) \in \mathbb{R}^{D}.
\end{equation}
To constrain the representation to the unit hypersphere $\mathbb{S}^{D-1} = \{\bm{u} \in \mathbb{R}^{D} : \|\bm{u}\|_2 = 1\}$, we apply $L_2$ normalization:
\begin{equation}
    \bm{z} = \frac{\bm{v}}{\|\bm{v}\|_2}, \quad \|\bm{z}\|_2 = 1.
\end{equation}
Class prototypes are vectors $\bm{p}_k \in \mathbb{R}^{D}$, also normalized to unit length:
\begin{equation}
    \bm{p}_k \in \mathbb{S}^{D-1}, \quad k = 1,\dots,K.
\end{equation}
During training, we maintain the constraint explicitly by re-normalizing after each gradient step. The prototypes can be viewed as trainable points on the sphere.

The predictive distribution is given by a temperature-scaled softmax over the cosine similarities:
\begin{equation}
    \hat{\bm{y}}(\bm{x}; \tau) = \sigma\!\left(\frac{\mathbf{P}^\top \bm{z}}{\tau}\right), \quad \text{where } \sigma(\bm{\ell})_k = \frac{\exp(\ell_k)}{\sum_{j=1}^K \exp(\ell_j)},
\end{equation}
and $\mathbf{P} = [\bm{p}_1, \dots, \bm{p}_K] \in \mathbb{R}^{D \times K}$. The scalar $\tau > 0$ is a learnable temperature parameter that controls the concentration of the softmax distribution.

For uncertainty quantification at test time, we use a fixed, separately tuned temperature $\tau_{\text{unc}} > 0$. The uncertainty score for an input $\bm{x}$ is defined as the complement of the maximum softmax probability:
\begin{equation}
    u(\bm{x}) = 1 - \max_{k \in [K]} \sigma\!\left(\frac{\mathbf{P}^\top \bm{z}}{\tau_{\text{unc}}}\right)_k.
    \label{eq:uncertainty_def}
\end{equation}

\subsection{Training Objective and Gradient Derivations}
\label{sec:gradients}

Given a training set $\mathcal{D}_{\text{train}} = \{(\bm{x}_i, y_i)\}_{i=1}^{N}$, we minimize the empirical cross-entropy loss:
\begin{equation}
    \mathcal{L}(\bm{\theta}, \mathbf{P}, \tau) = -\frac{1}{N} \sum_{i=1}^{N} \log \hat{y}_{i, y_i},
\end{equation}
where $\hat{\bm{y}}_i = \sigma(\mathbf{P}^\top \bm{z}_i / \tau)$. We optimize using stochastic gradient descent with mini-batches of size $B$. Below we derive the exact gradients required for backpropagation.

\subsubsection{Gradient with Respect to the Normalized Embedding \texorpdfstring{$\bm{z}$}{z}}
\label{sec:grad_z}

For a single sample $(\bm{x}, y)$, let $\hat{\bm{y}} = \sigma(\bm{\ell})$ with $\ell_k = \bm{z}^\top \bm{p}_k / \tau$. The cross-entropy loss is $\ell_{\text{CE}} = -\log \hat{y}_y$. The derivative of the log-softmax is well known:
\begin{equation}
    \frac{\partial \ell_{\text{CE}}}{\partial \ell_k} = \hat{y}_k - \delta_{k,y},
\end{equation}
where $\delta_{k,y}$ is the Kronecker delta. By the chain rule,
\begin{equation}
    \frac{\partial \ell_{\text{CE}}}{\partial \bm{z}} = \sum_{k=1}^K \frac{\partial \ell_{\text{CE}}}{\partial \ell_k} \frac{\partial \ell_k}{\partial \bm{z}} = \sum_{k=1}^K (\hat{y}_k - \delta_{k,y}) \cdot \frac{\bm{p}_k}{\tau} = \frac{1}{\tau}\left( \sum_{k=1}^K \hat{y}_k \bm{p}_k - \bm{p}_y \right).
\end{equation}
Thus, for a mini-batch $\mathcal{B}$,
\begin{equation}
    \frac{\partial \mathcal{L}}{\partial \bm{z}_i} = \frac{1}{\tau |\mathcal{B}|} \left( \sum_{k=1}^K \hat{y}_{ik} \bm{p}_k - \bm{p}_{y_i} \right), \quad i \in \mathcal{B}.
    \label{eq:grad_z_full}
\end{equation}

\subsubsection{Gradient with Respect to the Raw Encoder Output \texorpdfstring{$\bm{v}$}{v}}
\label{sec:grad_v}

Since $\bm{z} = \bm{v} / \|\bm{v}\|_2$, we need the Jacobian of the normalization operation. Let $r = \|\bm{v}\|_2$. Then $\bm{z} = \bm{v}/r$. The derivative of $r$ with respect to $\bm{v}$ is $\partial r / \partial \bm{v} = \bm{v}/r = \bm{z}$. Using the quotient rule,
\begin{align}
    \frac{\partial \bm{z}}{\partial \bm{v}} &= \frac{1}{r} \mathbf{I}_D - \frac{\bm{v}}{r^2} \left(\frac{\partial r}{\partial \bm{v}}\right)^\top = \frac{1}{r} \mathbf{I}_D - \frac{\bm{v} \bm{z}^\top}{r^2} = \frac{1}{r} \left( \mathbf{I}_D - \bm{z}\bm{z}^\top \right).
\end{align}
The matrix $\mathbf{I}_D - \bm{z}\bm{z}^\top$ is the orthogonal projector onto the tangent space of the sphere at $\bm{z}$. By the chain rule,
\begin{equation}
    \frac{\partial \ell_{\text{CE}}}{\partial \bm{v}} = \frac{\partial \bm{z}}{\partial \bm{v}} \frac{\partial \ell_{\text{CE}}}{\partial \bm{z}} = \frac{1}{\|\bm{v}\|_2} (\mathbf{I}_D - \bm{z}\bm{z}^\top) \frac{\partial \ell_{\text{CE}}}{\partial \bm{z}}.
\end{equation}
Substituting \eqref{eq:grad_z_full} yields the final expression:
\begin{equation}
    \frac{\partial \mathcal{L}}{\partial \bm{v}_i} = \frac{1}{\tau \|\bm{v}_i\|_2 |\mathcal{B}|} (\mathbf{I}_D - \bm{z}_i \bm{z}_i^\top) \left( \sum_{k=1}^K \hat{y}_{ik} \bm{p}_k - \bm{p}_{y_i} \right).
    \label{eq:grad_v_full}
\end{equation}
This gradient is then backpropagated through the encoder $\bm{g}_{\bm{\theta}}$ to update $\bm{\theta}$.

\subsubsection{Gradient with Respect to the Prototypes \texorpdfstring{$\bm{p}_k$}{p\_k}}
\label{sec:grad_p}

Since the prototypes must remain on the unit sphere, we parametrize them via unconstrained variables $\tilde{\bm{p}}_k \in \mathbb{R}^{D}$ and set $\bm{p}_k = \tilde{\bm{p}}_k / \|\tilde{\bm{p}}_k\|_2$. For a single sample, the gradient with respect to $\tilde{\bm{p}}_k$ is:
\begin{align}
    \frac{\partial \ell_{\text{CE}}}{\partial \tilde{\bm{p}}_k} &= \frac{\partial \ell_{\text{CE}}}{\partial \bm{p}_k} \frac{\partial \bm{p}_k}{\partial \tilde{\bm{p}}_k}.
\end{align}
First, $\partial \ell_{\text{CE}} / \partial \bm{p}_k = (\hat{y}_k - \delta_{k,y}) \bm{z} / \tau$. The Jacobian of the normalization is identical to the case for $\bm{z}$:
\begin{equation}
    \frac{\partial \bm{p}_k}{\partial \tilde{\bm{p}}_k} = \frac{1}{\|\tilde{\bm{p}}_k\|_2} (\mathbf{I}_D - \bm{p}_k \bm{p}_k^\top).
\end{equation}
Thus,
\begin{equation}
    \frac{\partial \ell_{\text{CE}}}{\partial \tilde{\bm{p}}_k} = \frac{1}{\tau \|\tilde{\bm{p}}_k\|_2} (\hat{y}_k - \delta_{k,y}) (\mathbf{I}_D - \bm{p}_k \bm{p}_k^\top) \bm{z}.
\end{equation}
Averaging over a mini-batch gives:
\begin{equation}
    \frac{\partial \mathcal{L}}{\partial \tilde{\bm{p}}_k} = \frac{1}{\tau \|\tilde{\bm{p}}_k\|_2 |\mathcal{B}|} \sum_{i \in \mathcal{B}} (\hat{y}_{ik} - \delta_{k,y_i}) (\mathbf{I}_D - \bm{p}_k \bm{p}_k^\top) \bm{z}_i.
    \label{eq:grad_p_full}
\end{equation}
In practice, after updating $\tilde{\bm{p}}_k$, we explicitly re-normalize to enforce the unit norm constraint exactly.

\subsubsection{Gradient with Respect to the Temperature \texorpdfstring{$\tau$}{tau}}
\label{sec:grad_tau}

The temperature $\tau$ is a positive scalar parameter. The derivative of the loss with respect to $\tau$ for a single sample is:
\begin{align}
    \frac{\partial \ell_{\text{CE}}}{\partial \tau} &= \sum_{k=1}^K \frac{\partial \ell_{\text{CE}}}{\partial \ell_k} \frac{\partial \ell_k}{\partial \tau}
    = \sum_{k=1}^K (\hat{y}_k - \delta_{k,y}) \left( -\frac{\bm{z}^\top \bm{p}_k}{\tau^2} \right) \\
    &= -\frac{1}{\tau^2} \left( \sum_{k=1}^K \hat{y}_k \bm{z}^\top \bm{p}_k - \bm{z}^\top \bm{p}_y \right)
    = \frac{1}{\tau^2} \left( \bm{z}^\top \bm{p}_y - \sum_{k=1}^K \hat{y}_k \bm{z}^\top \bm{p}_k \right).
\end{align}
For a mini-batch, we have:
\begin{equation}
    \frac{\partial \mathcal{L}}{\partial \tau} = \frac{1}{\tau^2 |\mathcal{B}|} \sum_{i \in \mathcal{B}} \left( \bm{z}_i^\top \bm{p}_{y_i} - \sum_{k=1}^K \hat{y}_{ik} \bm{z}_i^\top \bm{p}_k \right).
    \label{eq:grad_tau_full}
\end{equation}
To ensure $\tau$ remains positive, we either clamp the value after the update or use a log-parametrization $\tau = \exp(\phi)$ during training.

\subsection{Post-hoc Temperature Calibration}
\label{sec:posthoc}

After training, we optionally refine the temperature on a held-out validation set $\mathcal{D}_{\text{val}} = \{(\bm{x}_i, y_i)\}_{i=1}^{M}$ with the encoder $\bm{\theta}$ and prototypes $\mathbf{P}$ frozen. The optimal temperature $\tau^*$ is the minimizer of the negative log-likelihood:
\begin{equation}
    \tau^* = \arg\min_{\tau > 0} \; \mathcal{L}_{\text{NLL}}(\tau) = -\frac{1}{M} \sum_{i=1}^{M} \log \frac{\exp(\bm{z}_i^\top \bm{p}_{y_i} / \tau)}{\sum_{j=1}^K \exp(\bm{z}_i^\top \bm{p}_j / \tau)}.
    \label{eq:posthoc_opt}
\end{equation}

\begin{lemma}[Convexity in \texorpdfstring{$1/\tau$}{1/tau}]
The function $\mathcal{L}_{\text{NLL}}(\tau)$ is strictly convex in the variable $\beta = 1/\tau$ on the domain $\beta > 0$.
\end{lemma}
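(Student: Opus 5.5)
The plan is to rewrite $\mathcal{L}_{\text{NLL}}$ in terms of $\beta = 1/\tau$ and compute its second derivative directly. Write $s_{ij} = \bm{z}_i^\top \bm{p}_j$ for the frozen cosine similarities, which are fixed numbers in $[-1,1]$ once $\bm{\theta}$ and $\mathbf{P}$ are frozen. Substituting into \eqref{eq:posthoc_opt} gives
\begin{equation*}
  F(\beta) := \mathcal{L}_{\text{NLL}}(1/\beta) = \frac{1}{M}\sum_{i=1}^M \Bigl( -\beta\, s_{i y_i} + \log \sum_{j=1}^K \exp(\beta s_{ij}) \Bigr).
\end{equation*}
Each summand is the sum of a linear function of $\beta$ and a log-sum-exp composed with the affine map $\beta \mapsto \beta \bm{s}_i$. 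This already gives convexity, since log-sum-exp is convex and convexity is preserved under affine precomposition and nonnegative sums.

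For strictness I will differentiate twice. Let $q_{ij}(\beta) = \exp(\beta s_{ij}) / \sum_{l} \exp(\beta s_{il})$ denote the softmax weights. Then
\begin{equation*}
  F'(\beta) = \frac{1}{M}\sum_i \Bigl(\sum_j q_{ij} s_{ij} - s_{iy_i}\Bigr),
\end{equation*}
which matches the gradient formula \eqref{eq:grad_tau_full} up to the change of variables. The second derivative is
\begin{equation*}
  F''(\beta) = \frac{1}{M}\sum_i \Bigl(\sum_j q_{ij} s_{ij}^2 - \bigl(\textstyle\sum_j q_{ij} s_{ij}\bigr)^2\Bigr) = \frac{1}{M}\sum_i \mathrm{Var}_{q_i(\beta)}(s_{i\cdot}).
\end{equation*}
Each term is the variance of the similarities $s_{i1},\dots,s_{iK}$ under the distribution $q_i(\beta)$, so it is nonnegative.

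The main obstacle is showing $F''(\beta) > 0$. Because every $q_{ij}(\beta)$ is strictly positive, the $i$-th variance vanishes if and only if $s_{i1} = \dots = s_{iK}$, that is, $\bm{z}_i$ is equidistant from all prototypes. So $F'' > 0$ on all of $\beta > 0$ holds exactly when at least one validation point has non-constant similarities. If every validation embedding were equidistant from all prototypes, then $F$ would be constant (equal to $\log K$), and strict convexity would fail.

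I will therefore make this mild nondegeneracy condition explicit as a hypothesis. It holds generically, and in particular whenever the model attains any nontrivial accuracy, since then the prototypes cannot be all equidistant from every $\bm{z}_i$. Under that hypothesis, $F'' > 0$ everywhere on $(0,\infty)$, which yields strict convexity in $\beta$. A unique minimizer exists whenever $F'$ changes sign, which justifies using L-BFGS on $\beta$ in Algorithm~\ref{alg:volta_compact}.
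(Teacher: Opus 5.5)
Your proof is correct and follows essentially the same route as the paper: you split each per-sample term into a linear part plus a log-sum-exp (the paper's cumulant-generating-function argument), and strictness fails only when all similarities $\bm{z}_i^\top \bm{p}_j$ coincide. Your explicit variance formula for $F''(\beta)$ makes the paper's appeal to a ``non-degenerate validation set'' precise, and it requires only one validation point with non-constant similarities, whereas the paper's ``sum of strictly convex functions'' step implicitly assumes every summand is strictly convex.
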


\begin{proof}
For a fixed sample $(\bm{z}, y)$, define $a_j = \bm{z}^\top \bm{p}_j$. The contribution to the loss is
\[
\ell(\beta) = -\beta a_y + \log \sum_{j=1}^K \exp(\beta a_j).
\]
The log-sum-exp function $f(\beta) = \log \sum_j \exp(\beta a_j)$ is convex in $\beta$ because its Hessian is positive semidefinite (it is the cumulant generating function of a discrete distribution). The term $-\beta a_y$ is linear, so $\ell(\beta)$ is convex. Moreover, $\ell(\beta)$ is strictly convex unless all $a_j$ are equal, which is not the case for a non-degenerate validation set. The sum of strictly convex functions is strictly convex, hence $\mathcal{L}_{\text{NLL}}(1/\beta)$ is strictly convex in $\beta$.
\end{proof}

Because of strict convexity, the minimizer $\beta^*$ is unique and can be found efficiently using a few iterations of L-BFGS or Newton's method. Convergence is guaranteed to the global optimum.

\subsection{Theoretical Guarantees}
\label{sec:theorems}

We now establish rigorous results concerning VOLTA's calibration, OOD detection capability, and selective prediction performance. These theorems formalize the empirical findings in the main paper.

\subsubsection{Calibration of Temperature-Scaled Prototype Classifiers}

\begin{definition}[Expected Calibration Error (ECE)]
Let $\hat{P}(y \mid \bm{x})$ be a probabilistic classifier and let $\hat{p}(\bm{x}) = \max_k \hat{P}(k \mid \bm{x})$ be its confidence. For any bin $B_m = [\frac{m-1}{M}, \frac{m}{M})$, define the accuracy $\text{acc}(B_m) = \mathbb{P}(\hat{y} = y \mid \hat{p}(\bm{x}) \in B_m)$ and the average confidence $\text{conf}(B_m) = \mathbb{E}[\hat{p}(\bm{x}) \mid \hat{p}(\bm{x}) \in B_m]$. The ECE is
\[
\text{ECE} = \sum_{m=1}^{M} \frac{|B_m|}{N} \left| \text{acc}(B_m) - \text{conf}(B_m) \right|.
\]
A classifier is perfectly calibrated if $\text{acc}(B_m) = \text{conf}(B_m)$ for all $m$.
\end{definition}

\begin{theorem}[Asymptotic Calibration via Optimal Temperature]
\label{thm:calibration_rigorous}
Assume the training distribution $P_{XY}$ is such that for each class $y$, the random variables $\Delta_k = \bm{z}^\top(\bm{p}_y - \bm{p}_k)$ for $k \neq y$ are independent and identically distributed with a symmetric, unimodal density centered at some $\mu > 0$. Suppose the encoder $\bm{g}_{\bm{\theta}}$ and prototypes $\mathbf{P}$ are fixed after training (or converged to a stationary point). Let $\tau^*$ be the minimizer of the population negative log-likelihood:
\[
\tau^* = \arg\min_{\tau > 0} \; \mathbb{E}_{(\bm{x},y)}\left[ -\log \frac{\exp(\bm{z}^\top \bm{p}_y / \tau)}{\sum_{j=1}^K \exp(\bm{z}^\top \bm{p}_j / \tau)} \right].
\]
Then, as the size of the validation set used for temperature scaling tends to infinity, the resulting predictor $P_{\tau^*}(y \mid \bm{x})$ is perfectly calibrated in the sense that for any confidence level $p \in [0,1]$,
\[
\mathbb{P}\left( y = \arg\max_k P_{\tau^*}(k \mid \bm{x}) \;\middle|\; \max_k P_{\tau^*}(k \mid \bm{x}) = p \right) = p.
\]
\end{theorem}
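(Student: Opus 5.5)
The plan is to route the statement through the \emph{propriety of the log-loss}: if the one-parameter family $\{P_\tau\}_{\tau>0}$ contains the true conditional law of $y$ given the frozen embedding $\bm{z}$, then the population NLL minimizer $\tau^*$ must select that law. A true conditional law is automatically calibrated at every confidence level $p$. The proof then splits into three steps. The first is a consistency step passing from the validation set to the population. The second shows that the model is well-specified under the stated assumptions on the $\Delta_k$. The third converts well-specification into the pointwise calibration identity in the statement.

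\textbf{Step 1 (validation to population).} Work with $\beta=1/\tau$ and the empirical NLL $\widehat{\mathcal{L}}_M(\beta)$ from \eqref{eq:posthoc_opt}. Each summand is convex in $\beta$ by the Convexity Lemma. Each summand is also bounded on compact $\beta$-sets, because $|\bm{z}^\top\bm{p}_j|\le 1$. The pointwise law of large numbers therefore upgrades to uniform convergence on compacts, by the standard convexity argument of Rockafellar / Andersen--Gill. Strict convexity of the population objective gives a unique minimizer $\beta^*$, so $\hat\beta_M\to\beta^*$ almost surely. Since $P_\beta(\cdot\mid\bm{x})$ is continuous in $\beta$, it then suffices to prove calibration of $P_{\tau^*}$ itself.

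\textbf{Step 2 (well-specification).} Fix a class $y$ and condition on the embedding. Because the $\Delta_k$, $k\neq y$, are i.i.d., the conditional law of the label given the vector of similarities $(a_1,\dots,a_K)$, with $a_j=\bm{z}^\top\bm{p}_j$, is exchangeable in the non-true coordinates. I will try to use this, together with the symmetric unimodal density around $\mu$, to write the Bayes posterior in Gibbs form $\mathbb{P}(y=k\mid\bm{z})\propto\exp(\beta_0 a_k)$ for some $\beta_0>0$.

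The route is Bayes' rule applied to the class-conditional densities of the gap vector. The aim is to show that the log-ratio $\log\frac{\mathbb{P}(y=k\mid\bm z)}{\mathbb{P}(y=j\mid\bm z)}$ depends on $\bm z$ only through $a_k-a_j$, and does so linearly. Symmetry about $\mu$ supplies the antisymmetry $g(-t)=-g(t)$ of this log-ratio function $g$. Exchangeability supplies its additivity across pairs, and additivity plus measurability forces linearity. Once $\mathbb{P}(y\mid\bm z)=P_{\beta_0}(y\mid\bm z)$, Gibbs' inequality shows that $\beta_0$ minimizes the population NLL. Uniqueness from Step 1 then gives $\beta^*=\beta_0$.

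\textbf{Step 3 (calibration).} With $P_{\tau^*}(\cdot\mid\bm x)=\mathbb{P}(\cdot\mid\bm z)$, condition on the event $\{\max_k P_{\tau^*}(k\mid\bm x)=p\}$. This event is $\sigma(\bm z)$-measurable. By the tower property,
\[
\mathbb{P}\bigl(y=\hat y \mid \max_k P_{\tau^*}=p\bigr)=\mathbb{E}\bigl[\mathbb{P}(y=\hat y\mid \bm z)\,\big|\,\max_k P_{\tau^*}=p\bigr]=\mathbb{E}[\,p\mid\cdot\,]=p,
\]
which is the claimed identity.

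\textbf{Main obstacle.} The main obstacle is Step 2. The assumption constrains the distribution of the gaps given the label, and it is not clear that this alone pins the posterior to the exponential family; the additivity across pairs is exactly where the i.i.d. hypothesis must be exploited. My first attempt will be a Cauchy-functional-equation argument on the log-ratio function $g$.

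If that fails, I would read the theorem's hypothesis as implicitly including that the conditional law lies in the softmax family, which is the sense in which ``fixed after training (or converged to a stationary point)'' is used. Steps 1 and 3 then go through verbatim.
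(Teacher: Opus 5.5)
Your plan follows the paper's own route: uniqueness and consistency of the temperature, propriety of the log-loss under well-specification, then conditioning on the confidence. You also correctly locate the crux in Step~2. But Step~2 is a genuine gap that cannot be closed. The hypothesis on the gaps $\Delta_k$ does not force $\mathbb{P}(y=\cdot\mid\bm{z})$ into the family $\{P_\beta\}$, and the statement as written is false.

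\textbf{Counterexample.} Take $K=2$, equal priors and $\bm{p}_2=-\bm{p}_1$, so the i.i.d.\ clause is vacuous. Let $\Delta=\bm{z}^\top(\bm{p}_y-\bm{p}_k)$, $k\neq y$, have density $\psi(t-\mu)$ with $\psi(u)=(c-|u|)_+/c^2$, $\mu=1/2$, $c=1$. This density is symmetric, unimodal, centred at $\mu>0$, and supported in $[-2,2]$ as the unit sphere requires.
\begin{itemize}
\item Since $\mathbb{P}(\Delta<0)=1/8>0$, the population NLL tends to $+\infty$ as $\beta\to\infty$. Its slope at $\beta=0$ is $-\mu/2$. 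So a unique $\tau^*\in(0,\infty)$ exists.
\item Write $t=|\bm{z}^\top(\bm{p}_1-\bm{p}_2)|$. The confidence of $P_{\tau^*}$ is $(1+e^{-t/\tau^*})^{-1}$, which is strictly increasing in $t$ and always below $1$.
\item The accuracy given $t$ is $\psi(t-\mu)/\bigl(\psi(t-\mu)+\psi(t+\mu)\bigr)$. This equals $1$ on the band $t\in(1/2,3/2)$, which has probability $1/2$.
\end{itemize}
So the identity fails for every $p$ in a nondegenerate interval. For $K=2$ this failure is generic. Since $|\Delta|\le 2$ forces compact support, either $\mathbb{P}(\Delta<0)=0$ and $\tau^*$ does not exist, or such a band has positive probability.

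\textbf{Why the functional-equation route fails.} It fails at its premise. For $K=2$ there is only one pair, so additivity has no content, and $g(t)=\log\psi(t-\mu)-\log\psi(t+\mu)$ is antisymmetric but not linear. For $K\ge 3$, with the same gap law for every class and priors $\pi_k$, the posterior implied by the hypothesis is $\propto\pi_k\prod_{j\neq k}\psi(a_k-a_j-\mu)$. Its pairwise log-ratios are not functions of $a_k-a_j$ alone. For instance, with Gaussian $\psi$ of variance $s^2$ (ignoring the support constraint), $K=3$ and equal priors, the log-ratio of classes $1,2$ is $\frac{a_1-a_2}{s^2}\bigl(3\mu+a_3-\frac{a_1+a_2}{2}\bigr)$, which depends on $a_3$.

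\textbf{Your fallback.} Assuming $\mathbb{P}(y=\cdot\mid\bm{z})=P_{\beta_0}(\cdot\mid\bm{z})$ for some $\beta_0>0$ yields a true theorem. Your Steps~1 and~3 (Gibbs' inequality plus uniqueness, then the tower property) prove it. But it is a different statement, in which the i.i.d./symmetric/unimodal hypothesis plays no role. It is also not what ``fixed after training (or converged to a stationary point)'' provides. Stationarity in $\tau$ gives only an empirical version of the single scalar condition $\mathbb{E}[\bm{z}^\top\bm{p}_y]=\mathbb{E}\bigl[\sum_k\hat{y}_k\,\bm{z}^\top\bm{p}_k\bigr]$. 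That is also the only condition characterising $\beta^*$, and one scalar equation cannot make accuracy equal confidence at every level.

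The paper's own proof has exactly this hole. Its Step~2 appeals to the well-specified case of the Kull et al./Guo et al.\ propriety argument without deriving well-specification from the hypotheses. The asserted log-concavity of the maximal logit gap does not supply it.
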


\begin{proof}
The proof proceeds in three steps.

\emph{Step 1: Uniqueness of the optimal temperature.} By Lemma 1, the population NLL is strictly convex in $\beta = 1/\tau$. Hence the minimizer $\beta^*$ (and thus $\tau^*$) is unique.

\emph{Step 2: Relationship between temperature scaling and calibration.} For a fixed feature extractor and prototypes, the logits $\ell_k = \bm{z}^\top \bm{p}_k / \tau$ are linear functions of $\bm{z}$. Under the symmetry and unimodality assumptions, the distribution of the maximum logit difference $\max_{k \neq y} (\ell_k - \ell_y)$ is log-concave. Temperature scaling with $\tau^*$ minimizes the proper scoring rule (NLL) and, as shown by Kull et al. (2017) and Guo et al. (2017), the minimizer of a proper scoring rule yields a calibrated forecaster when the model is well-specified.

\emph{Step 3: Consistency of empirical temperature scaling.} The empirical minimizer $\hat{\tau}_M$ on a validation set of size $M$ converges in probability to $\tau^*$ as $M \to \infty$ by the uniform law of large numbers and the convexity of the objective. Therefore, the asymptotic calibration property holds for $\hat{\tau}_M$.
\end{proof}

\noindent \textbf{Remark.} The assumptions are mild and hold in practice for deep embeddings that separate classes on the hypersphere. This theorem explains the very low ECE values observed for VOLTA across all benchmarks.

\subsubsection{Out-of-Distribution Detection via Prototype Distance}

\begin{definition}[OOD Detection AUROC]
Given an uncertainty score $u(\bm{x})$ and a mixture of in-distribution (ID) and out-of-distribution (OOD) samples, the Area Under the Receiver Operating Characteristic curve (AUROC) measures the probability that a randomly chosen OOD sample has a higher uncertainty score than a randomly chosen ID sample.
\end{definition}

\begin{theorem}[OOD Separation Guarantee]
\label{thm:ood_rigorous}
Assume that during training, the prototypes $\{\bm{p}_k\}_{k=1}^K$ converge to the class-conditional expectations of the ID embeddings on the hypersphere:
\[
\bm{p}_k = \frac{\mathbb{E}[\bm{z} \mid y = k]}{\|\mathbb{E}[\bm{z} \mid y = k]\|_2}, \quad k = 1,\dots,K.
\]
Assume further that the ID embeddings satisfy a concentration inequality: for each class $k$, there exists $\delta_k > 0$ such that
\[
\mathbb{P}\left( \bm{z}^\top \bm{p}_k \leq 1 - \delta_k \mid y = k \right) \leq \epsilon_{\text{ID}},
\]
and that OOD embeddings $\bm{z}_{\text{out}}$ are drawn from a distribution with support $\mathcal{Z}_{\text{OOD}}$ satisfying
\[
\sup_{\bm{z} \in \mathcal{Z}_{\text{OOD}}} \max_{k \in [K]} \bm{z}^\top \bm{p}_k \leq 1 - \Delta,
\]
for some $\Delta > 0$ independent of the sample size. Then, for any $\tau_{\text{unc}} > 0$, the expected uncertainty scores satisfy
\[
\mathbb{E}[u(\bm{x}_{\text{out}})] \geq \mathbb{E}[u(\bm{x}_{\text{in}})] + c(\Delta, \tau_{\text{unc}}, K),
\]
where $c > 0$ is a constant depending on the separation margin $\Delta$, the temperature, and the number of classes. Consequently, the AUROC for OOD detection converges to $1$ as $\Delta \to 1$ and the concentration $\epsilon_{\text{ID}} \to 0$.
\end{theorem}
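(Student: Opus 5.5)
The plan is to reduce everything to elementary bounds on the softmax of cosine similarities. For an embedding $\bm{z}$, write $s_j=\bm{z}^\top\bm{p}_j\in[-1,1]$, let $k^\star=\arg\max_j s_j$, and let $\gamma(\bm{z})=s_{k^\star}-\max_{j\neq k^\star}s_j$ be the top-two gap. From \eqref{eq:uncertainty_def} we have the exact identity
\[
u(\bm{x})=1-\Bigl(1+\textstyle\sum_{j\neq k^\star}e^{(s_j-s_{k^\star})/\tau_{\text{unc}}}\Bigr)^{-1}.
\]
This gives two inequalities. The upper bound is $u\le (K-1)e^{-\gamma/\tau_{\text{unc}}}$, which we use on ID samples. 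The lower bound is $u\ge 1-\bigl(1+(K-1)e^{-\gamma_{\max}/\tau_{\text{unc}}}\bigr)^{-1}$ whenever $\gamma\le\gamma_{\max}$, which we use on OOD samples. The proof will consist of an upper bound on $\mathbb{E}[u(\bm{x}_{\text{in}})]$, a lower bound on $\mathbb{E}[u(\bm{x}_{\text{out}})]$, and their difference, which defines $c(\Delta,\tau_{\text{unc}},K)$.

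\emph{ID step.} On the good event $\{\bm{z}^\top\bm{p}_y>1-\delta_y\}$, which has probability at least $1-\epsilon_{\text{ID}}$, we have $\|\bm{z}-\bm{p}_y\|_2\le\sqrt{2\delta_y}$. By Cauchy--Schwarz, $s_j\le \bm{p}_y^\top\bm{p}_j+\sqrt{2\delta_y}$ for every $j\neq y$. Writing $\rho=\max_{j\neq k}\bm{p}_k^\top\bm{p}_j$ for the maximal prototype coherence and $\delta=\max_k\delta_k$, this gives $\gamma\ge 1-\delta-\rho-\sqrt{2\delta}=:\gamma_{\text{in}}$. On the complementary event we simply use $u\le 1-1/K$. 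Hence
\[
\mathbb{E}[u(\bm{x}_{\text{in}})]\le (K-1)e^{-\gamma_{\text{in}}/\tau_{\text{unc}}}+\epsilon_{\text{ID}}(1-1/K).
\]

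\emph{OOD step.} This is the main obstacle. The hypothesis $\max_k s_k\le 1-\Delta$ controls only the top similarity, whereas $u$ depends on the gaps, and a small maximum does not by itself force a small gap. The first thing I would try is the crude bound available from $s_j\ge -1$, namely $\gamma\le 2-\Delta$. This yields $\mathbb{E}[u(\bm{x}_{\text{out}})]\ge 1-\bigl(1+(K-1)e^{-(2-\Delta)/\tau_{\text{unc}}}\bigr)^{-1}$.

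A sharper route uses the geometry of the prototypes. If they roughly span a regular simplex, so that $\sum_k\bm{p}_k\approx 0$, then $\sum_j s_j\approx 0$, and this forces the non-maximal similarities up toward $-(1-\Delta)/(K-1)$ on average. Then $\gamma\lesssim (1-\Delta)K/(K-1)$, which shrinks as $\Delta\to1$. I would state this as an explicit auxiliary assumption, either $\|\sum_k\bm{p}_k\|\le\eta$ or a coherence bound, because the statement as written does not supply it.

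\emph{Combining and the AUROC claim.} Subtracting the two bounds gives $c$. It is positive once $\gamma_{\text{in}}$ exceeds the OOD gap bound by a margin and $\epsilon_{\text{ID}}$ is small. I therefore expect to have to qualify ``for any $\tau_{\text{unc}}$'': the claim holds for every $\tau_{\text{unc}}$ under the conditions $\delta+\rho+\sqrt{2\delta}<\Delta$ (suitably adjusted for the OOD gap bound) and small $\epsilon_{\text{ID}}$, rather than unconditionally.

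For the AUROC conclusion I would not pass through expectations. Instead, choose a threshold $t$ strictly between the ID upper bound $(K-1)e^{-\gamma_{\text{in}}/\tau_{\text{unc}}}$ and the deterministic OOD lower bound. Then $\mathrm{AUROC}\ge\mathbb{P}(u_{\text{in}}<t)\,\mathbb{P}(u_{\text{out}}>t)\ge 1-\epsilon_{\text{ID}}$. As $\Delta\to1$, the OOD embeddings have all similarities near or below $0$ and so the OOD gap bound goes to $0$ (under the auxiliary geometric assumption), while $\gamma_{\text{in}}$ stays bounded away from $0$. Sending $\epsilon_{\text{ID}}\to0$ then gives $\mathrm{AUROC}\to1$.
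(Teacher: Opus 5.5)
Your plan has the same skeleton as the paper's proof, and your reservations sit exactly where that proof is deficient. The paper does three things. It lower-bounds the ID maximum softmax on the event $\{\bm{z}^\top\bm{p}_y\ge 1-\delta_y\}$ using $\rho=\max_{k\neq y}\bm{z}^\top\bm{p}_k$, supported only by the remark that $\rho\le 1-\delta_{\min}$ ``typically''; this gives only the gap bound $\delta_{\min}-\delta_y\le 0$. It upper-bounds the OOD maximum softmax via an undefined lower bound $\rho'$ on the remaining similarities; with $\rho'=-1$ this is precisely your crude bound $\gamma\le 2-\Delta$. It then notes $u(\bm{x}_{\text{out}})$ is bounded away from $0$ and ``takes expectations'', without ever checking that the OOD lower bound beats the ID upper bound.

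That check cannot succeed, because the statement is false as written. Take $K=2$, $\bm{p}_1=(1,0)$, $\bm{p}_2=(\cos\theta,\sin\theta)$ with $0<\theta<\pi/2$, and let ID class $k$ be a point mass at $\bm{p}_k$. Then the prototypes are the normalised class means and the concentration holds with $\epsilon_{\text{ID}}=0$ for every $\delta_k$. Let OOD be a point mass at $(0,-1)$, whose similarities are $(0,-\sin\theta)$, so $\Delta=1$. The ID gap is $1-\cos\theta$ and the OOD gap is $\sin\theta>1-\cos\theta$. For $K=2$ one has $u=1/(1+e^{\gamma/\tau_{\text{unc}}})$, so $u_{\text{out}}<u_{\text{in}}$ for every $\tau_{\text{unc}}$ and the AUROC is $0$. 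Extra geometric hypotheses are therefore unavoidable, and so is a constant $c$ that also depends on $\delta_k$, the prototype coherence and $\epsilon_{\text{ID}}$. Your qualified version is the right target, and your Cauchy--Schwarz/coherence ID bound is a real improvement over the paper's.

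Three points in your own argument need repair.

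\emph{The OOD lower bound.} The inequality $u\ge 1-(1+(K-1)e^{-\gamma_{\max}/\tau_{\text{unc}}})^{-1}$ does not follow from a bound on the top-two gap when $K\ge 3$. For example, similarities $(0,0,-1)$ have $\gamma=0$ but $u=1-(2+e^{-1/\tau_{\text{unc}}})^{-1}<2/3$. The inequality does hold if every gap $s_{k^\star}-s_j$ is at most $\gamma_{\max}$, which is true for your crude bound since $s_j\ge-1$. By Jensen applied to the convex map $t\mapsto e^{-t/\tau_{\text{unc}}}$, it also holds if only the \emph{average} gap is at most some $\bar\gamma_{\text{out}}$. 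The average gap is exactly what $\|\sum_k\bm{p}_k\|\le\eta$ controls, since $\frac{1}{K-1}\sum_{j\neq k^\star}(s_{k^\star}-s_j)=\frac{Ks_{k^\star}-\bm{z}^\top\sum_j\bm{p}_j}{K-1}\le\frac{K(1-\Delta)+\eta}{K-1}$.

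\emph{The geometric assumptions.} You need both conditions, not one or the other. The coherence $\rho$ keeps $\gamma_{\text{in}}$ large; a vanishing prototype sum alone permits duplicated prototypes with $\rho=1$. The sum condition is what makes the OOD gap vanish as $\Delta\to1$; with $K=2$, orthogonal prototypes and OOD point $-\bm{p}_2$ give equal ID and OOD gaps. A regular simplex supplies both.

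\emph{The ID upper bound.} Replace $(K-1)e^{-\gamma_{\text{in}}/\tau_{\text{unc}}}$, which exceeds $1$ for large $\tau_{\text{unc}}$, by the exact monotone form $u\le h(\gamma_{\text{in}})$, where $h(g)=\frac{(K-1)e^{-g/\tau_{\text{unc}}}}{1+(K-1)e^{-g/\tau_{\text{unc}}}}$. Use $u\ge h(\bar\gamma_{\text{out}})$ on OOD samples. Then $\gamma_{\text{in}}>\bar\gamma_{\text{out}}$ gives a valid threshold and $\mathrm{AUROC}\ge 1-\epsilon_{\text{ID}}$ for every $\tau_{\text{unc}}$. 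Only the expectation bound then needs $\epsilon_{\text{ID}}$ small relative to $h(\bar\gamma_{\text{out}})-h(\gamma_{\text{in}})$.
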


\begin{proof}
For an ID sample from class $y$, by the concentration assumption, with probability at least $1 - \epsilon_{\text{ID}}$ we have $\bm{z}^\top \bm{p}_y \geq 1 - \delta_y$. The softmax maximum for ID is then at least
\[
\max_k \sigma_k(\bm{z}; \tau_{\text{unc}}) \geq \frac{\exp((1-\delta_y)/\tau_{\text{unc}})}{\exp((1-\delta_y)/\tau_{\text{unc}}) + (K-1)\exp(\rho/\tau_{\text{unc}})},
\]
where $\rho = \max_{k \neq y} \bm{z}^\top \bm{p}_k \leq 1 - \delta_{\min}$ typically. Thus $u(\bm{x}_{\text{in}}) \leq \epsilon_{\text{ID}} + (1-\epsilon_{\text{ID}}) \cdot \text{small constant}$.

For an OOD sample, by the separation assumption, $\max_k \bm{z}^\top \bm{p}_k \leq 1 - \Delta$. The softmax maximum is at most
\[
\frac{\exp((1-\Delta)/\tau_{\text{unc}})}{\exp((1-\Delta)/\tau_{\text{unc}}) + (K-1)\exp(\rho'/\tau_{\text{unc}})},
\]
which is bounded away from $1$ uniformly over OOD samples. Therefore, $u(\bm{x}_{\text{out}}) \geq c > 0$ with high probability. Taking expectations yields the claimed separation.

The AUROC tends to $1$ as the overlap between the distributions of $u(\bm{x}_{\text{in}})$ and $u(\bm{x}_{\text{out}})$ vanishes, which occurs when $\Delta$ is large and $\epsilon_{\text{ID}}$ is small.
\end{proof}

\noindent \textbf{Remark.} The theorem shows that VOLTA's uncertainty score is directly tied to the maximum cosine similarity to the learned prototypes. OOD inputs naturally fall far from all prototypes, leading to high uncertainty. This geometric property is what gives VOLTA strong OOD detection performance without any auxiliary losses.

\subsubsection{Selective Prediction with Deterministic Uncertainty}

\begin{definition}[Selective Risk and Coverage]
For a given threshold $t \in [0,1]$, define the selective set $\mathcal{S}(t) = \{\bm{x} : u(\bm{x}) \leq t\}$. The coverage is $\phi(t) = \mathbb{P}(\bm{x} \in \mathcal{S}(t))$ and the selective risk is $R(t) = \mathbb{P}(\hat{y} \neq y \mid \bm{x} \in \mathcal{S}(t))$.
\end{definition}

\begin{theorem}[Deterministic Uncertainty Suffices for Selective Prediction]
\label{thm:selective_rigorous}
Let $u(\bm{x})$ be VOLTA's uncertainty score. Assume that the encoder $\bm{g}_{\bm{\theta}}$ is $L_{\bm{\theta}}$-Lipschitz with respect to its input and the feature extractor $f$ is $L_f$-Lipschitz. Then $u(\bm{x})$ is Lipschitz continuous on $\mathcal{X}$. Moreover, there exists a constant $\gamma > 0$ such that for any misclassified sample $\bm{x}$ (i.e., $\hat{y} \neq y$), we have $u(\bm{x}) \geq \gamma$. Consequently, the selective risk $R(t)$ is a non-increasing function of $t$, and for any $\epsilon \in (0,1)$, we can choose $t_\epsilon = \gamma - \epsilon'$ (for some $\epsilon' > 0$) to achieve coverage $\phi(t_\epsilon) \geq 1 - \epsilon$ while the selective risk $R(t_\epsilon)$ is bounded by the Bayes error plus a term that vanishes as $\epsilon \to 0$.
\end{theorem}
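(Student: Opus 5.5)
The plan is to prove the four assertions of Theorem~\ref{thm:selective_rigorous} in the order stated: Lipschitz continuity of $u$, the existence of the margin $\gamma$ on misclassified points, monotonicity of $R(t)$, and finally the choice $t_\epsilon=\gamma-\epsilon'$ with its coverage and risk bounds. Throughout, I use only the definitions of Section~\ref{sec:notation}. These are $\bm z=\bm v/\|\bm v\|_2$ with $\bm v=\bm g_{\bm\theta}(f(\bm x))$, unit prototypes, and $u$ as in \eqref{eq:uncertainty_def}. I also use compactness of $\mathcal X$.

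\emph{Lipschitz continuity.} I will write $u$ as a composition:
\[
\bm x\mapsto \bm h\mapsto \bm v\mapsto \bm z\mapsto \mathbf P^\top\bm z/\tau_{\text{unc}}\mapsto 1-\max_k\sigma(\cdot)_k .
\]
\begin{itemize}
\item The first two maps are $L_f$- and $L_{\bm\theta}$-Lipschitz by hypothesis.
\item The map $\bm z\mapsto \mathbf P^\top\bm z/\tau_{\text{unc}}$ is $\|\mathbf P\|_2/\tau_{\text{unc}}\le\sqrt K/\tau_{\text{unc}}$-Lipschitz.
\item Softmax is $1$-Lipschitz in $\ell_2$, and $\max_k$ is $1$-Lipschitz in $\ell_\infty$.
\item The only delicate factor is normalization. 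By the Jacobian computed in Section~\ref{sec:grad_v}, it is $1/r_0$-Lipschitz on $\{\|\bm v\|_2\ge r_0\}$.
\end{itemize}
Since $\bm x\mapsto\|\bm v\|_2$ is continuous on the compact set $\mathcal X$, it attains a minimum $r_0$. I will add the (implicit) requirement that the encoder output never vanishes, so that $r_0>0$. The resulting constant is $L_fL_{\bm\theta}\sqrt K/(r_0\tau_{\text{unc}})$.

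\emph{The margin $\gamma$.} Because $|\bm z^\top\bm p_k|\le1$, every logit lies in $[-1/\tau_{\text{unc}},1/\tau_{\text{unc}}]$. Hence, for every $\bm x$,
\[
\max_k\sigma_k\le \frac{e^{1/\tau_{\text{unc}}}}{e^{1/\tau_{\text{unc}}}+(K-1)e^{-1/\tau_{\text{unc}}}}<1,
\]
so $u(\bm x)\ge\gamma_0>0$ uniformly. I then set
\[
\gamma=\inf\{u(\bm x):\hat y(\bm x)\neq y\}.
\]
This satisfies $\gamma\ge\gamma_0>0$, which gives the claim $u(\bm x)\ge\gamma$ for every misclassified $\bm x$. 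By continuity of $u$ and compactness, the infimum is attained on the closure of the misclassified region.

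\emph{Behaviour of $R(t)$ and the choice of $t_\epsilon$.} For $t<\gamma$, the set $\mathcal S(t)$ contains no misclassified points, so $R(t)=0$ whenever $\phi(t)>0$. Thus $R$ is constant, in particular non-increasing, on $[0,\gamma)$. For $t\ge\gamma$ I will argue monotonicity from the ordering of $u$ along the risk--coverage curve used in the main text, treating $R$ as a function of the threshold in the way the paper indexes it. Taking $t_\epsilon=\gamma-\epsilon'$ yields $R(t_\epsilon)=0$. This is trivially at most the Bayes error plus any nonnegative vanishing term.

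\emph{Main obstacle.} The hard step is the coverage claim $\phi(\gamma-\epsilon')\ge1-\epsilon$. It requires $\mathbb P(u(\bm x)\ge\gamma-\epsilon')\le\epsilon$. The approach I would try first is to split
\[
\mathbb P(u\ge\gamma-\epsilon')\le \mathbb P(\hat y\ne y)+\mathbb P\bigl(\hat y=y,\ u\ge\gamma-\epsilon'\bigr).
\]
The first term is the Bayes error plus the classifier's excess risk. I would then choose $\epsilon'$ using the Lipschitz bound, so that correctly classified points lying within distance $\epsilon'/\mathrm{Lip}(u)$ of the misclassified region carry mass at most the remaining budget. This will only close if $\epsilon$ is at least the misclassification rate. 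I therefore expect to need to read the statement's ``for any $\epsilon$'' as ``for any $\epsilon$ exceeding the error mass'', or else to assume that the correctly classified points concentrate at $u<\gamma$.
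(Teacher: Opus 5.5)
Your Lipschitz and margin steps are correct, and more careful than the paper's. The paper asserts without justification that $\|\bm v\|_2$ is bounded away from $0$; you rightly make $r_0>0$ an explicit hypothesis. The paper also obtains $\gamma$ from Proposition~\ref{prop:misclass_bound_rigorous}, whose last equality is wrong: $1-1/(1+e^{-\Delta/\tau_{\text{unc}}})=1/(1+e^{\Delta/\tau_{\text{unc}}})$. This \emph{decreases} in $\Delta$, so a lower bound $\Delta_{\min}$ on the gap yields nothing. The usable bound comes from $\Delta\le 2$, i.e.\ from $|\bm z^\top\bm p_k|\le 1$, which is the same ingredient as your $\gamma_0$. (Minor: on the non-convex set $\{\|\bm v\|_2\ge r_0\}$ the Jacobian bound alone does not justify the constant $1/r_0$, since segments may cross the ball. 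The Dunkl--Williams inequality $\|\hat{\bm v}-\hat{\bm w}\|_2\le 2\|\bm v-\bm w\|_2/(\|\bm v\|_2+\|\bm w\|_2)$, with $\hat{\bm v}=\bm v/\|\bm v\|_2$, does.)

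The genuine gap is the monotonicity of $R$, which you defer to ``the ordering of $u$ along the risk--coverage curve''. Nothing can fill it, because the claim is false, and your own observations refute it. You showed $R(t)=0$ for every $t<\gamma$ with $\phi(t)>0$. But $u\le 1-1/K$ gives $\mathcal S(1)=\mathcal X$ and hence $R(1)=\mathbb P(\hat y\neq y)$. So whenever the error rate is positive and some $\mathcal S(t)$ with $t<\gamma$ has positive mass, $R$ rises from $0$ to the error rate and is not non-increasing. The reverse monotonicity fails too. A risk--coverage curve is monotone only if $u$ orders points by conditional error probability, and the hypotheses do not give this. For example, mass $0.5$ of correct points below $\gamma$, $0.2$ misclassified at $u=\gamma$, and $0.3$ correct above it give $R=0$, then $2/7$, then $1/5$. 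The paper's argument (``as $t$ decreases, $\mathcal S(t)$ shrinks, so the risk cannot increase'') is no help. It treats a conditional probability as monotone under inclusion, and it argues the direction opposite to the statement. All that is provable is that $R$ vanishes on $\{t<\gamma:\phi(t)>0\}$.

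On coverage your diagnosis is right. $\mathcal S(\gamma-\epsilon')$ contains only correctly classified points, so $\phi(t_\epsilon)\le 1-\mathbb P(\hat y\neq y)$, and the claim fails for every $\epsilon<\mathbb P(\hat y\neq y)$. (The paper's assertion that $\phi(t)$ is \emph{at least} the mass of the correct set has the inequality backwards.) But your Lipschitz fix does not rescue the remaining range of $\epsilon$ either. Lipschitz continuity only forces $u\ge\gamma-\epsilon'$ \emph{near} the misclassified region. It gives no upper bound on $u$ at correctly classified points far from that region, and those points make up exactly the mass $\mathbb P(\hat y=y,\ u\ge\gamma-\epsilon')$ you need to control. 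Since $\gamma$ is fixed by the least uncertain misclassified point, one confidently wrong region can put $\gamma$ below $u$ at most correct points and make $\phi(\gamma-\epsilon')$ nearly zero. So the separation hypothesis you mention last, e.g.\ $\mathbb P(u(\bm x)<\gamma)>1-\epsilon$, is not a convenience; it is the whole content of the coverage claim. Once it is assumed, continuity of measure gives $\epsilon'>0$ with $\phi(\gamma-\epsilon')\ge 1-\epsilon$ and $R(\gamma-\epsilon')=0$, and the rest of your argument stands.
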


\begin{proof}
We break the proof into parts.

\emph{Lipschitz continuity of $u$:} The composition of Lipschitz functions is Lipschitz. The encoder and feature extractor are Lipschitz by assumption. The normalization $\bm{v} \mapsto \bm{v}/\|\bm{v}\|_2$ is Lipschitz on any domain excluding the origin (which is avoided since $\|\bm{v}\|_2$ is bounded away from zero for bounded inputs and Lipschitz networks). The softmax and maximum functions are Lipschitz. Hence $u(\bm{x})$ is Lipschitz continuous.

\emph{Lower bound for misclassified samples:} By Proposition~\ref{prop:misclass_bound_rigorous} (stated and proved below), for any misclassified sample, the uncertainty score is at least $\gamma = 1 - 1/(1 + \exp(\Delta_{\min}/\tau_{\text{unc}}))$, where $\Delta_{\min} > 0$ is a lower bound on the logit gap between the incorrect prediction and the true class. This gap is positive by the definition of misclassification.

\emph{Selective risk monotonicity:} As $t$ decreases, the set $\mathcal{S}(t)$ shrinks, so the risk on the accepted set cannot increase.

\emph{Existence of threshold:} Choose $t < \gamma$. Then $\mathcal{S}(t)$ contains no misclassified samples, so $R(t) = 0$. The coverage $\phi(t)$ is at least the probability of the set of correctly classified ID samples, which is positive. By adjusting $t$, we can trade off coverage and risk as claimed.
\end{proof}

\begin{proposition}[Lower Bound on Uncertainty for Misclassified Samples]
\label{prop:misclass_bound_rigorous}
Let $\bm{x}$ be a test sample with true label $y$ and predicted label $\hat{y} = \arg\max_k \bm{z}^\top \bm{p}_k$. If $\hat{y} \neq y$, then
\[
u(\bm{x}) \geq 1 - \frac{1}{1 + \exp(\Delta / \tau_{\text{unc}})}, \quad \text{where } \Delta = \bm{z}^\top \bm{p}_{\hat{y}} - \bm{z}^\top \bm{p}_{y} > 0.
\]
\end{proposition}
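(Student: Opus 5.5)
The plan is to reduce the statement to a comparison of the two relevant softmax coordinates, the predicted class $\hat y$ and the true class $y$, at temperature $\tau_{\text{unc}}$. Write $\sigma_k=\sigma(\mathbf{P}^\top\bm z/\tau_{\text{unc}})_k$ and $a_k=\bm z^\top\bm p_k$. Since $\tau_{\text{unc}}>0$ does not change the ordering of the logits, the maximiser of the softmax is $\hat y$, and so $u(\bm x)=1-\sigma_{\hat y}$ by \eqref{eq:uncertainty_def}. The hypothesis $\hat y\neq y$ gives $\Delta=a_{\hat y}-a_y>0$, and the definition of $\hat y$ as the argmax ensures $\Delta\ge 0$ even in the tie-free case. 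The first step is the exact identity
\[
\frac{\sigma_{\hat y}}{\sigma_y+\sigma_{\hat y}}=\frac{1}{1+\exp(-\Delta/\tau_{\text{unc}})}=1-\frac{1}{1+\exp(\Delta/\tau_{\text{unc}})}.
\]
This expresses the right-hand side of the proposition as the predicted-class probability conditioned on the pair $\{y,\hat y\}$. It follows by dividing numerator and denominator by $\exp(a_{\hat y}/\tau_{\text{unc}})$.

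With this identity the claim becomes $1-\sigma_{\hat y}\ge \sigma_{\hat y}/(\sigma_y+\sigma_{\hat y})$. Setting $s=\sigma_y+\sigma_{\hat y}\le 1$, this is equivalent to
\[
\sigma_{\hat y}\le \frac{s}{1+s}.
\]
The second step is to verify this inequality from the structure of the logits. Write $r=1-s$ for the softmax mass on the remaining $K-2$ classes. The inequality then reads $\sigma_{\hat y}(2-r)\le 1-r$. Substituting $\sigma_y=\sigma_{\hat y}e^{-\Delta/\tau_{\text{unc}}}$, it reduces to a condition linking $\Delta/\tau_{\text{unc}}$ and $r$. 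I would bound $r$ from below using the cosine-similarity constraints $a_k\in[-1,1]$, which give $\sigma_k\ge \sigma_{\hat y}e^{-2/\tau_{\text{unc}}}$ for every $k$. This yields $r\ge (K-2)\sigma_{\hat y}e^{-2/\tau_{\text{unc}}}$, and I would then check whether the resulting inequality closes.

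I expect this second step to be the main obstacle. The right-hand side $e^{\Delta/\tau}/(1+e^{\Delta/\tau})$ always exceeds $1/2$, so the proposition forces $\sigma_{\hat y}<1/2$ for every misclassified point. Nothing in the pairwise comparison alone enforces this. For example, with $K=2$ we have $r=0$, and the required inequality $\sigma_{\hat y}\le 1/2$ contradicts $\Delta>0$. So the second step cannot succeed from the definitions alone.

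My approach to this obstacle is to isolate exactly what extra hypothesis the inequality needs. From the reduction, the condition is $(1-\sigma_{\hat y})(\sigma_y+\sigma_{\hat y})\ge\sigma_{\hat y}$. It holds precisely when enough softmax mass sits outside $\hat y$. A sufficient condition that is easy to state is large $K$ combined with small $2/\tau_{\text{unc}}$ relative to $\log K$, so that the lower bound on $r$ makes $\sigma_{\hat y}$ small. I would prove the statement under that regime. I would also record that for a small number of classes, and in particular $K=2$, only the pairwise form given by the identity above is valid. That pairwise form is what Theorem~\ref{thm:selective_rigorous} actually uses to produce a positive $\gamma$.
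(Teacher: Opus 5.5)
Your diagnosis is correct: the statement is false as written, and the error is in the paper, not in your reasoning. The right-hand side equals $\sigma_{\hat y}/(\sigma_y+\sigma_{\hat y})>1/2$, so the proposition would force $\sigma_{\hat y}<1/2$ on every misclassified point, and your $K=2$ counterexample is valid. It also fails in the paper's own regime. Take $K=10$, $\tau_{\text{unc}}=0.1$, $\bm z=\bm p_{\hat y}$, $\bm z^\top\bm p_y=0.9$, and the other eight prototypes orthogonal to $\bm z$. Then $u(\bm x)\approx 0.269$, while the claimed bound is $e/(1+e)\approx 0.731$.

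The paper's proof uses exactly your pairwise reduction. It bounds the softmax denominator below by $\exp(\bm z^\top\bm p_{\hat y}/\tau_{\text{unc}})+\exp(\bm z^\top\bm p_{y}/\tau_{\text{unc}})$, which correctly gives $u(\bm x)\ge 1-\frac{1}{1+\exp(-\Delta/\tau_{\text{unc}})}$. It then asserts $1-\frac{1}{1+\exp(-\Delta/\tau_{\text{unc}})}=1-\frac{1}{1+\exp(\Delta/\tau_{\text{unc}})}$, citing $\exp(-\Delta)=1/\exp(\Delta)$. That final equality is a sign error: the left side equals $\frac{1}{1+\exp(\Delta/\tau_{\text{unc}})}$.

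Your write-up needs two corrections. First, state the valid result explicitly: for every $K$, $u(\bm x)\ge 1-\frac{\sigma_{\hat y}}{\sigma_y+\sigma_{\hat y}}=\frac{1}{1+\exp(\Delta/\tau_{\text{unc}})}$. Your identity on its own is not a bound on $u$. It is the complement of your pairwise probability, not the probability itself, that bounds $u$. So your remark that the pairwise form is what Theorem~\ref{thm:selective_rigorous} uses is inaccurate. That theorem's $\gamma=1-1/(1+\exp(\Delta_{\min}/\tau_{\text{unc}}))$ repeats the flipped expression. The corrected bound decreases in $\Delta$, so a positive $\gamma$ must come from the upper bound $\Delta\le 2$, giving $\gamma=1/(1+e^{2/\tau_{\text{unc}}})$. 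That is roughly $2\times10^{-9}$ at $\tau_{\text{unc}}=0.1$.

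Second, your large-$K$ rescue does close, but it buys essentially nothing. The claim is equivalent to $\sigma_{\hat y}\le 1/(1+e^{\Delta/\tau_{\text{unc}}})$. Your lower bound on $r$ delivers this exactly when $(K-2)e^{-2/\tau_{\text{unc}}}\ge 2\sinh(\Delta/\tau_{\text{unc}})$. Uniformly over configurations this needs $K$ of order $e^{4/\tau_{\text{unc}}}$. To see that nothing smaller works, take $\bm z=\bm p_{\hat y}$ and every other prototype equal to $-\bm z$: the claim then holds only if $K-1\ge e^{4/\tau_{\text{unc}}}$. For the paper's parameters this condition is vacuous. The right repair is therefore the corrected bound, not an extra hypothesis on $K$.
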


\begin{proof}
By definition,
\[
u(\bm{x}) = 1 - \frac{\exp(\bm{z}^\top \bm{p}_{\hat{y}} / \tau_{\text{unc}})}{\sum_{j=1}^K \exp(\bm{z}^\top \bm{p}_j / \tau_{\text{unc}})}.
\]
Since $\hat{y} \neq y$, we have $\bm{z}^\top \bm{p}_{\hat{y}} \geq \bm{z}^\top \bm{p}_{y}$. The denominator can be bounded below by $\exp(\bm{z}^\top \bm{p}_{\hat{y}} / \tau_{\text{unc}}) + \exp(\bm{z}^\top \bm{p}_{y} / \tau_{\text{unc}})$. Therefore,
\[
u(\bm{x}) \geq 1 - \frac{\exp(\bm{z}^\top \bm{p}_{\hat{y}} / \tau_{\text{unc}})}{\exp(\bm{z}^\top \bm{p}_{\hat{y}} / \tau_{\text{unc}}) + \exp(\bm{z}^\top \bm{p}_{y} / \tau_{\text{unc}})} = 1 - \frac{1}{1 + \exp(-\Delta / \tau_{\text{unc}})} = 1 - \frac{1}{1 + \exp(\Delta / \tau_{\text{unc}})}.
\]
The last equality uses $\exp(-\Delta) = 1/\exp(\Delta)$.
\end{proof}

\noindent This bound guarantees that misclassified samples receive a non-negligible uncertainty score, enabling effective selective prediction as stated in the theorem.

The analysis above demonstrates that VOLTA's design---normalized embeddings, learnable prototypes on the hypersphere, and temperature scaling---provides:
\begin{itemize}
    \item \textbf{Calibration:} Post-hoc temperature scaling on a validation set yields a uniquely optimal, well-calibrated predictor due to convexity of the NLL in $1/\tau$.
    \item \textbf{OOD Detection:} Uncertainty based on maximum cosine similarity to prototypes naturally separates ID and OOD samples when OOD embeddings lie outside the convex hull of the prototypes.
    \item \textbf{Selective Prediction:} The deterministic uncertainty score provides a Lipschitz-continuous ranking, with a positive lower bound for misclassified examples, enabling reliable rejection without stochastic sampling.
\end{itemize}
These theoretical guarantees align with the empirical results in the main paper and justify VOLTA as a simple yet highly effective approach for uncertainty quantification.

\end{document}